\def\BibTeX{{\rm B\kern-.05em{\sc i\kern-.025em b}\kern-.08em
    T\kern-.1667em\lower.7ex\hbox{E}\kern-.125emX}}
\DeclareMathOperator*{\argmin}{arg\,min}
\newtheorem{theorem}{Theorem} 
\newtheorem{lemma}[theorem]{Lemma} 
\newtheorem{definition}{Definition} 
\newtheorem{assumption}[theorem]{Assumption}
\newcommand\lref[1]{Lemma~\ref{#1}}
\newcommand\aref[1]{Assumption~\ref{#1}}
\begin{document}
\title{Parameterized Convex Minorant for Objective Function Approximation in Amortized Optimization}

\author{
    Jinrae Kim
    and
    Youdan Kim, \textit{Senior Member, IEEE}
    \thanks{Jinrae Kim is with the Department of Aerospace Engineering,
    Seoul National University, Seoul 08826, Republic of Korea (e-mail: kjl950403@snu.ac.kr)}
    \thanks{Youdan Kim is with the Department of Aerospace Engineering,
        Institute of Advanced Aerospace Technology,
    Seoul National University, Seoul 08826, Republic of Korea (e-mail: ydkim@snu.ac.kr)}
}
\markboth{Journal of \LaTeX\ Class Files,~Vol.~18, No.~9, September~2020}%
{How to Use the IEEEtran \LaTeX \ Templates}

\maketitle

\begin{abstract}%
  Parameterized convex minorant (PCM) method is proposed
  for the approximation of the objective function in amortized optimization.
  In the proposed method,
  the objective function approximator is expressed by the sum of a PCM and a nonnegative gap function,
  where the objective function approximator is bounded from below by the PCM convex in the optimization variable.
  The proposed objective function approximator is a universal approximator for continuous functions,
  and the global minimizer of the PCM attains the global minimum of the objective function approximator.
  Therefore,
  the global minimizer of the objective function approximator can be obtained by a single convex optimization.
  As a realization of the proposed method,
  extended parameterized log-sum-exp network is proposed 
  by utilizing a parameterized log-sum-exp network as the PCM.
  Numerical simulation is performed 
  for parameterized non-convex objective function approximation and
  for learning-based nonlinear model predictive control
  to demonstrate the performance and characteristics of the proposed method.
  The simulation results support that the proposed method
  can be used to learn objective functions and
  to find a global minimizer reliably and quickly by using convex optimization algorithms.
\end{abstract}

\begin{IEEEkeywords}
  parametric optimization, convex optimization, amortized optimization, universal approximation theorem
\end{IEEEkeywords}

\begin{figure*}
  \centering
  \begin{subfigure}[b]{0.31\linewidth}
    \centering
    \includegraphics[width=1.0\linewidth]{./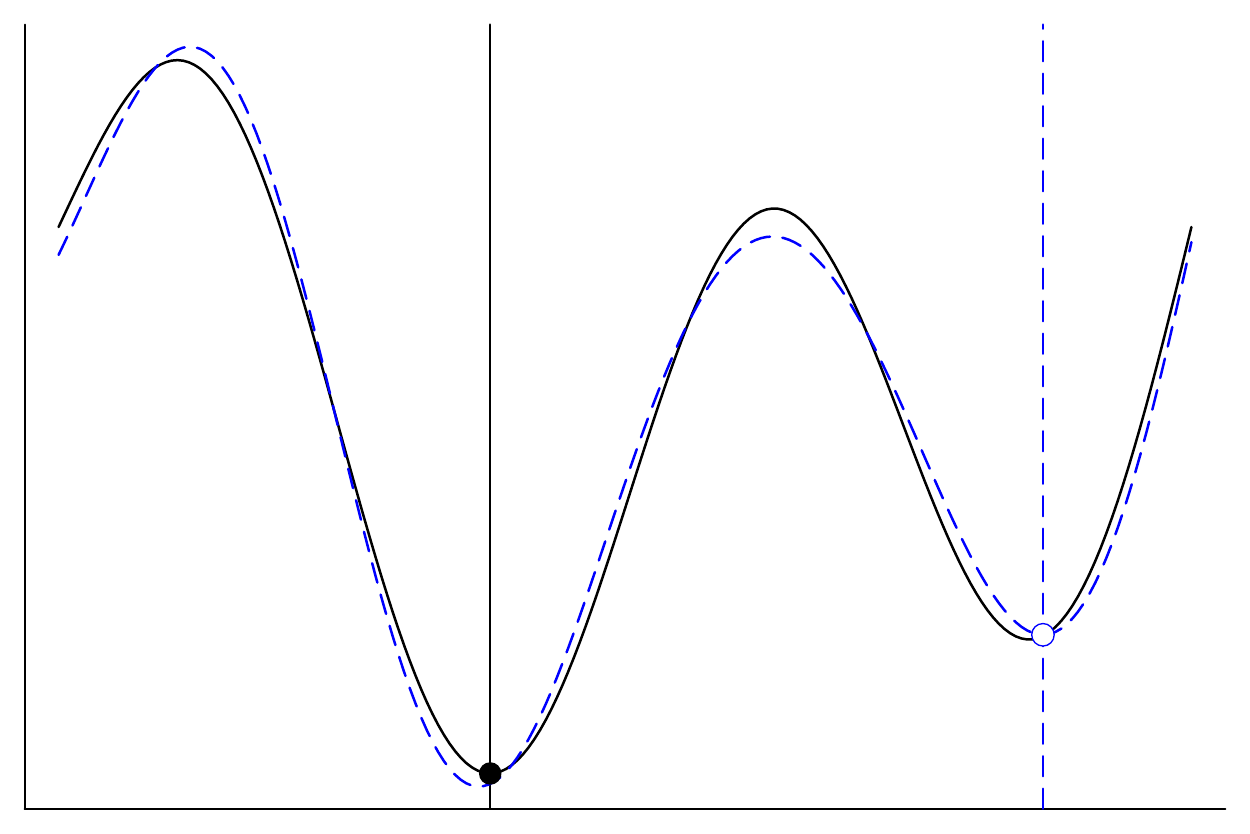}
    \caption{}
  \end{subfigure}
  \begin{subfigure}[b]{0.31\linewidth}
    \centering
    \includegraphics[width=1.0\linewidth]{./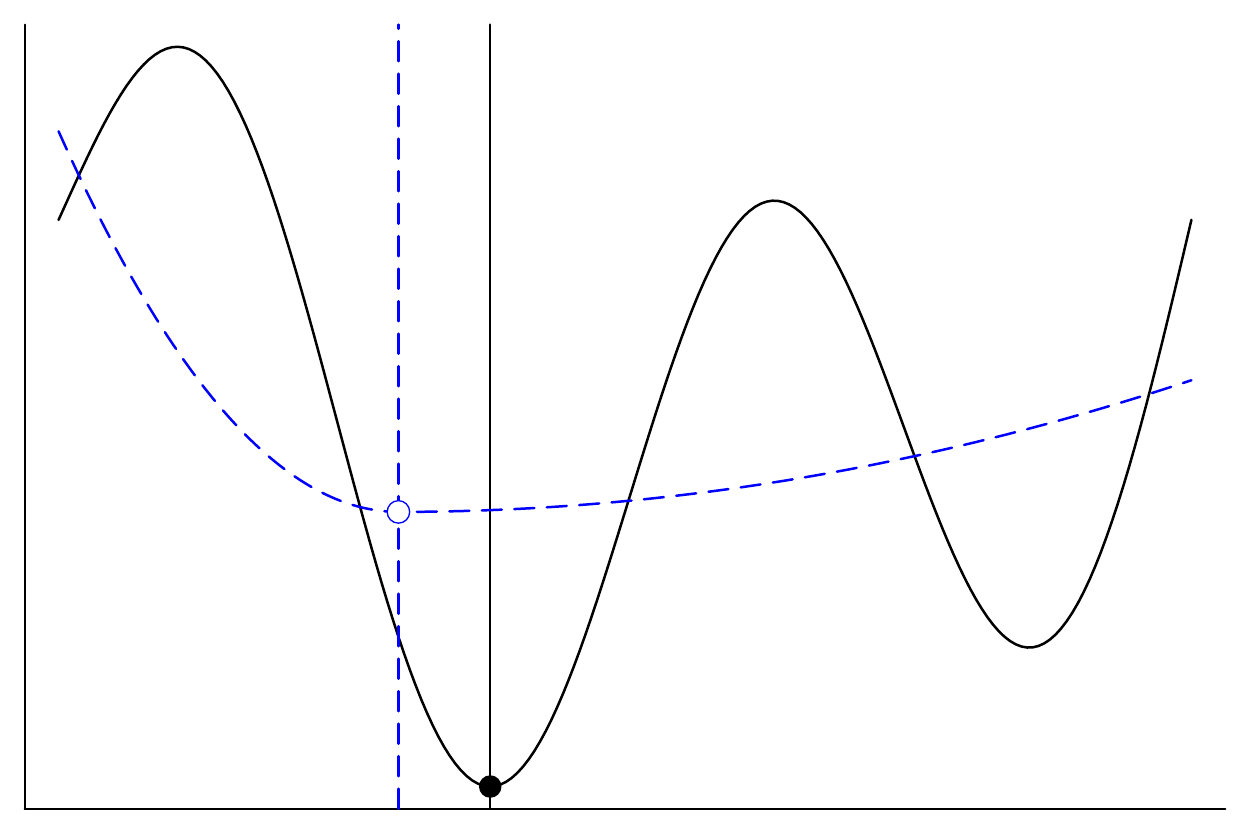}
    \caption{}
  \end{subfigure}
  \begin{subfigure}[b]{0.31\linewidth}
    \centering
    \includegraphics[width=1.0\linewidth]{./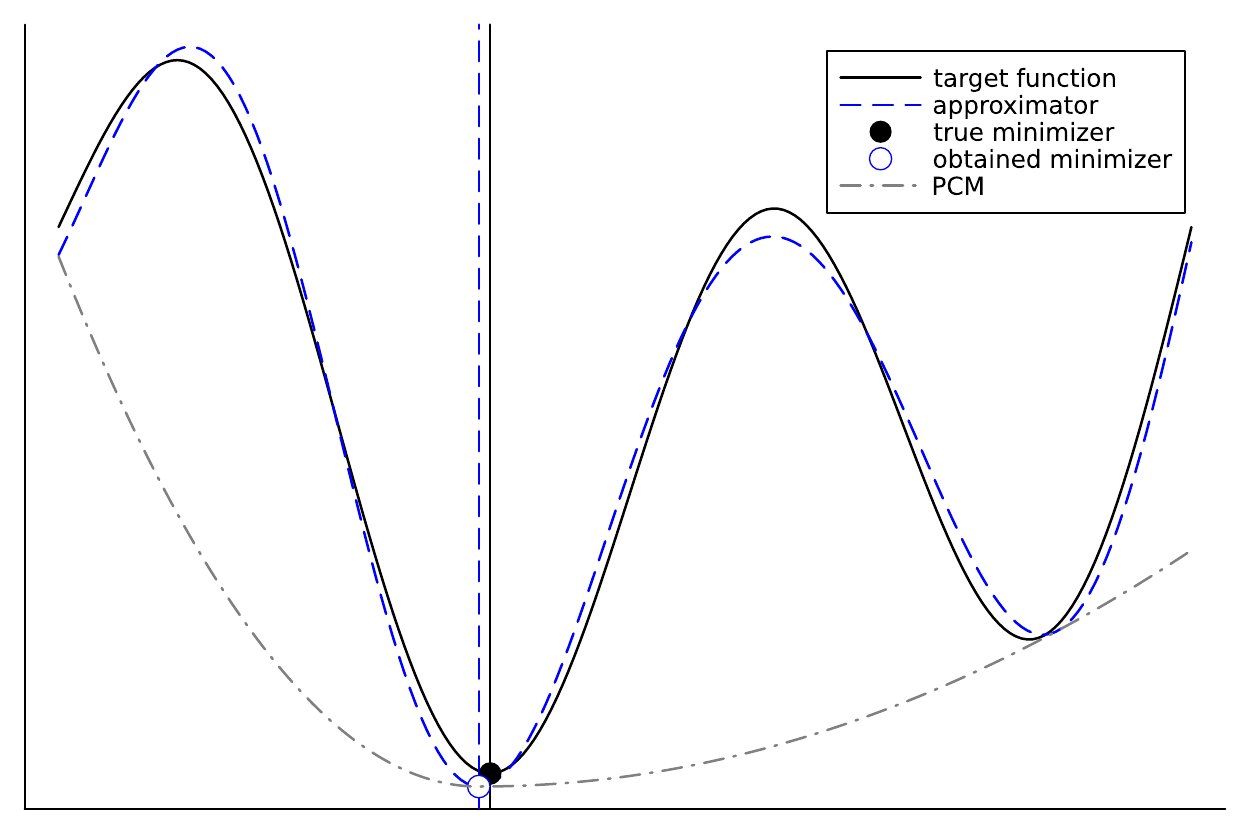}
    \caption{}
  \end{subfigure}
  \caption{Concept visualization.
  (a) Approximators with high expressiveness and local optimality (e.g. FNN, DLSE),
  (b) approximators with global optimality and restrictive expressiveness (e.g. PLSE),
  and (c) the proposed approximator can find a global minimizer by minimizing the parameterized convex minorant (PCM), which shares a global minimizer with the proposed approximator.
}
  \label{fig:concept}
\end{figure*}

\section{Introduction}
Parametric optimization finds a minimizer of the objective function that depends on the parameter.
A wide range of engineering applications can be viewed as parametric optimization,
including reinforcement learning and optimal control.
The involved optimization problem of the parametric optimization is, however, non-convex in general,
and the objective function itself might be unknown.
Therefore,
it is challenging to find a global minimizer quickly and reliably.

In machine learning,
\textit{amortized optimization} was proposed as a framework of learning-based parametric optimization~\cite{amosTutorialAmortizedOptimization2023}.
Amortized optimization is basically to learn certain related functions in advance
to quickly find the minimizer of the parametric optimization in operation.
Amortized optimization methods can be classified by what functions are learned:
minimizer function approximation and objective function approximation methods.
In each method,
an approximator is used to learn the minimizer function or objective function, respectively.
The objective function approximation method is attractive because
the objective can be evaluated for different optimization variables and parameters by using the trained objective function approximator.
However,
it is challenging to find a global minimizer in the objective function approximation method
because it typically involves non-convex optimization to retrieve the approximate minimizer from the highly expressive objective function approximator.
Therefore,
in most studies on amortized optimization,
the minimizer function approximation method
has been utilized under the assumption that there exists
a unique single-valued continuous minimizer function,
which restricts the range of applications~\cite{amosTutorialAmortizedOptimization2023}.

For the objective function approximation method,
one may use approximators with specific \textit{shapes} to avoid the challenge of local optimality.
A promising approach is to use \textit{shape-preserving} approximators exploiting convexity.
That is,
an approximator is (parameterized) convex and is also capable of approximating any (parameterized) convex functions in terms of the universal approximation theorem,
where an objective function is said to be \textit{parameterized convex} if the objective function is convex in the optimization variable for
a given parameter~\cite{kimParameterizedConvexUniversal2022b}.
This approach can avoid local optimality while having good convergence properties (e.g., global convergence and a fast convergence rate)
because a global minimizer of many convex functions can be obtained reliably and quickly by using state-of-the-art convex optimization solvers~\cite{boydConvexOptimization2004,malyutaConvexOptimizationTrajectory2022,liuSurveyConvexOptimization2017}.
For example,
max-affine (MA) and log-sum-exp (LSE) networks were proposed as shape-preserving universal approximators for convex functions,
where the LSE network is a smoothed version of
the MA network~\cite{calafioreLogSumExpNeuralNetworks2020}.
Because the projection of a convex function is also convex,
the LSE network was utilized to approximate the state-action value function (Q-function) in reinforcement learning,
an example of the objective function approximation method in amortized optimization~\cite{calafioreEfficientModelFreeQFactor2020}.
Kim and Kim proposed parameterized MA (PMA) and parameterized LSE (PLSE) networks
as the extension of the MA and LSE networks, respectively,
where PMA and PLSE are shape-preserving universal approximators for parameterized convex functions~\cite{kimParameterizedConvexUniversal2022b}.
The PLSE network was applied for optimal gain prediction of VTOL aircraft~\cite{kimVTOLAircraftOptimal2023}.
Nonetheless, as concerned in the study,
shape-preserving approximators may fail to approximate the target objective function having more general shapes,
which may provide degraded approximate minimizers.
This can be interpreted as the \textit{trade-off between the expressiveness and shape} in objective function approximation.
To mitigate this trade-off issue,
difference of LSE (DLSE) network was proposed~\cite{calafioreUniversalApproximationResult2020}.
The DLSE network is a universal approximator for continuous functions,
which can obtain the local minimum by iteratively solving convex optimization subproblems,
featured as difference of convex algorithms (DCA)~\cite{lethiDCProgrammingDCA2018}.
Although DCA has good convergence properties such as global and linear convergence,
DLSE still suffers from local optimality even with iterations of convex optimization.

In this study,
to overcome the limitations of existing studies and resolve the trade-off between expressiveness and shape,
\textit{parameterized convex minorant} (PCM) method is proposed as a new objective function approximation method in amortized optimization.
In the proposed method,
the objective function approximator consists of a PCM and a nonnegative gap function,
where the objective function approximator is bounded from below by the PCM.
The concept of the proposed method is illustrated in \autoref{fig:concept},
compared to other types of approximators.
This study reveals that, for a given parameter,
the minimizer of the objective function approximator
can be found by minimizing the PCM instead,
where the corresponding minimization problem of the PCM is convex optimization.
Additionally,
it is proven that the objective function approximator is a universal approximator for continuous functions
when shape-preserving universal approximators for parameterized convex continuous and continuous functions are utilized as the PCM and the gap function, respectively.
These results imply that the proposed objective function approximators have high expressiveness as universal approximators,
and the \textit{global minimizer} can be obtained by \textit{a single convex optimization}.
As a realization of the proposed method,
extended PLSE (EPLSE) network is proposed by using the PLSE network as the PCM with slight modification.
The proposed method is demonstrated and compared with other existing approximators
by numerical simulation of parameterized non-convex objective function approximation and learning-based nonlinear model predictive control.

The rest of this paper is organized as follows.
\autoref{sec:preliminaries} provides the preliminaries of this study including convex analysis, set-valued analysis, parametric optimization and amortized optimization, and universal approximators.
In \autoref{sec:main_results},
the PCM method is proposed with optimality analysis, universal approximation theorem with implementation guidelines.
EPLSE network is proposed as a realization of the given method.
In \autoref{sec:numerical_simulation},
numerical simulation is performed to demonstrate the characteristics and the performance of the proposed EPLSE network.
\autoref{sec:conclusion} concludes this study with discussion and future works.

\section{Preliminaries}
\label{sec:preliminaries}
\subsection{Convex analysis}
\textit{Parameterized convexity} is an extension of convexity for parametric optimization,
interpreted as decision making in~\cite{kimParameterizedConvexUniversal2022b}.
\begin{definition}[Parameterized convexity]
  A function $f:X \times U \to \mathbb{R}$ is said to be \textup{parameterized convex} if $f(x, \cdot)$ is convex for any $x \in X$.
\end{definition}
Given function $f: U \to \mathbb{R}$,
a function $g$ is said to be a \textit{convex minorant} of $f$ if
$g$ is convex and $g(u) \leq f(u)$, $\forall u \in U$.
The \textit{greatest convex minorant} is defined as follows~\cite{abramsonConvexMinorantsRandom2011}.
\begin{definition}[Greatest convex minorant (GCM)]
  Given function $f:U \to \mathbb{R}$,
  \textup{the greatest convex minorant} of $f$ is denoted by
  $\textup{conv}f(u) :=\sup_{g \in G} g(u)$ where $G$ is the set of convex minorants of $f$.
\end{definition}
Given function $f: X \times U \to \mathbb{R}$,
a function $g$ is said to be a \textit{parameterized convex minorant} of $f$ if
$g$ is parameterized convex and $g(x, u) \leq f(x, u)$, $\forall (x, u) \in X \times U$.
The \textit{parameterized greatest convex minorant} is defined as an extension of greatest convex minorant for parametric optimization.
\begin{definition}[Parameterized greatest convex minorant (PGCM)]
  Given function $f:X \times U \to \mathbb{R}$,
  the parameterized greatest convex minorant of $f$ is denoted by
  $\textup{pconv}f(x, u) := \textup{conv}f_{x}(u) $ where $f_{x}(u) := f(x, u)$ for given $x \in X$.
\end{definition}

\subsection{Set-valued analysis}
A function $f:X \to Y$ is said to be a \textit{multivalued} function (or set-valued function, correspondence, etc.) if $f(x) \subset Y, \forall x \in X$.
Ordinary functions are referred to as \textit{single-valued} in set-valued analysis,
that is, $f(x) = \{y\}$ for $y \in Y$.
The graph of multivalued function $f: X \to Y$ is denoted by
$\text{Graph}(f) := \{ (x, y) \in X \times Y \vert y \in f(x) \}$.
A multivalued function $f: X \to Y$ is said to be upper hemicontinuous (u.h.c.) at $x_{0}$ if,
for any open neighborhood $V$ of $f(x_{0})$, there exists a neighborhood $U$ of $x_{0}$ such that $f(x) \subset V, \forall x \in U$~\cite{aubinSetValuedAnalysis2009}.

Given multivalued function $f: X \to Y$ and $\epsilon > 0$,
a (continuous) single-valued function $g: X \to Y$ is said to be a \textit{(continuous) approximate selection}
(or, simply, an \textit{$\epsilon$-selection}) of $f$ if $\text{Graph}(g) \subset B(\text{Graph}(f), \epsilon) := \cup_{(x, y) \in \text{Graph}(f)}B((x, y), \epsilon)$,
where $B(z, \epsilon) := \{w \vert \lVert w - z \rVert \leq \epsilon \}$ denotes the ball around $z \in \mathbb{R}^{n}$ with radius $\epsilon > 0$.

\subsection{Parametric optimization and amortized optimization}
Given objective function $f: X \times U \to \mathbb{R}$,
\textit{parametric optimization} finds a minimizer $u^{\star} \in U$ minimizing the objective function $f(x, \cdot)$ for given parameter $x \in X$.
The parametric optimization can be used to model a wide range of engineering applications, including the following:
\begin{itemize}
  \item (Reinforcement learning~\cite{suttonReinforcementLearningIntroduction2018}) Given state $s \in S$, find an optimal action $a^{\star} \in A$ minimizing state-action value function $Q(s, \cdot)$ (often referred to as Q-function).
  \item (Optimal control~\cite{liberzonCalculusVariationsOptimal2012}) Given state $x \in X$, find an optimal control input $u^{\star} \in U$ minimizing Hamiltonian $H(x, \cdot)$.
  \item (Model predictive control~\cite{camachoModelPredictiveControl2007}) Given state $x \in X$, find an optimal control input sequence $(u^{1}, \ldots, u^{N})^{\star} \in U \times \ldots \times U$ minimizing $N$-horizon objective function $J(x, \cdot, \ldots, \cdot)$, and then apply $u^{1}$ for each time step.
\end{itemize}
When certain related functions such as the minimizer function
and the objective function are learned to perform the parametric optimization (quickly) in operation,
it is referred to as \textit{amortized optimization}.

In this study,
let us suppose that
the parameter set $X \subset \mathbb{R}^{n}$ is compact,
the optimization variable set $U \subset \mathbb{R}^{m}$ is convex compact,
and the objective function $f: X \times U \to \mathbb{R}$ is continuous.
Additionally,
the following regularity condition is assumed:
\begin{assumption}
  \label{assumption:regularity_condition}
  Given a continuous objective function $f:X \times U \to \mathbb{R}$,
  for all $\epsilon > 0$,
  the multivalued minimizer set function $U^{\star}: X \to U$ admits a single-valued continuous approximate selection
  $h_{\epsilon}: X \to U$ where $U^{\star}(x) := \argmin_{u \in U}f(x, u)$, $\forall x \in X$,
  that is,
  $\text{Graph}(h_{\epsilon}) \subset B(\text{Graph}(U^{\star}), \epsilon)$.
\end{assumption}
That is, $\text{Graph}(h_{\epsilon}) \subset B(\text{Graph}(U^{\star}), \epsilon)$
implies that given $(x, u) \in X \times U$, there exists $x_{0} \in X$ and $u_{0} \in U^{\star}(x_{0})$
such that $\lVert(x_{0}, u_{0}) - (x, h_{\epsilon}(x)) \rVert < \epsilon$~\cite{aubinSetValuedAnalysis2009}.
It should be noted that this regularity condition is as mild as the utilization of a single-valued approximator for minimizer function approximation in amortized optimization.
In reinforcement learning,
this can be viewed as if the given problem admist a policy approximator.

\subsection{Existing universal approximators and problem formulation}
The universal approximation theorem (UAT) states that a certain approximator is capable of approximating a class of functions with arbitrary precision on a compact set,
and in this case, the function approximator is said to be a \textit{universal approximator}.
For example,
the feed-forward neural network (FNN) is a representative example of universal approximators for continuous functions~\cite{pinkusApproximationTheoryMLP1999,hornikMultilayerFeedforwardNetworks1989}:
given continuous function $f: X \to \mathbb{R}^{m}$ defined on a compact set $X \subset \mathbb{R}^{n}$,
for any $\epsilon > 0$,
there exists a network structure and parameters of FNN $\hat{f}$ such that $\lVert \hat{f} - f \rVert_{\infty} < \epsilon$.
If a universal approximator preserves its shape, it is said to be \textit{shape-preserving universal approximator}.

Max-affine (MA) and log-sum-exp (LSE) networks are shape-preserving universal approximators for convex functions~\cite{calafioreLogSumExpNeuralNetworks2020}.
The MA network can be written as
\begin{equation}
  \label{eq:MA}
  f^{\text{MA}}(z) = \max_{1 \leq i \leq I} \left(
      \langle a_{i}, z \rangle + b_{i}
  \right),
\end{equation}
where $a_{i} \in \mathbb{R}^{n}$ and $b_{i} \in \mathbb{R}$ for $i \in \{1, \ldots, I\}$ are network parameters.
The LSE network is a smoothed version of the MA network that replaces the max operator with the LSE operator.
The corresponding LSE network can be written as
\begin{equation}
  \label{eq:lse}
  f^{\text{LSE}}(z) = T \log \left(
    \sum_{i=1}^{I} \exp \left(
      \frac{\langle a_{i}, z \rangle + b_{i}}{T}
    \right)
  \right),
\end{equation}
where $T \in \mathbb{R}_{> 0}$ is referred to as \textit{temperature}~\cite{calafioreLogSumExpNeuralNetworks2020}.
The variable $z$ can be replaced as $z = [x^{\intercal}, u^{\intercal}]^{\intercal}$ with parameter $x$ and optimization variable $u$
for parametric optimization.
The resulting optimization problem is convex optimization because the projection preserves convexity~\cite{boydConvexOptimization2004,calafioreEfficientModelFreeQFactor2020}.

Parameterized MA (PMA) and parameterized LSE (PLSE) networks
are extended versions of MA and LSE networks, respectively,
for objective function approximation in amortized optimization, referred to as decision-making in~\cite{kimParameterizedConvexUniversal2022b}.
The PMA and PLSE networks are shape-preserving universal approximators for \textit{parameterized convex} functions,
hence they can cover a more general class of functions than MA and LSE networks.
The PLSE network can be written as
\begin{equation}
  \label{eq:plse}
  f^{\text{PLSE}}(x, u) = T \log \left(
    \sum_{i=1}^{I} \exp \left(
      \frac{\langle a_{\alpha_{i}}(x), u \rangle + b_{\beta_{i}}(x)}{T}
    \right)
  \right),
\end{equation}
where $a_{\alpha_{i}}: X \to \mathbb{R}^{m}$ and $b_{\beta_{i}}: X \to \mathbb{R}$
are the shape-preserving universal approximators for continuous functions with network parameters $\alpha_{i}$ and $\beta_{i}$,
respectively,
for $i \in \{1, \ldots, I\}$.
(P)MA and (P)LSE networks can obtain a global minimizer reliably and quickly by exploiting convexity~\cite{boydConvexOptimization2004,liuSurveyConvexOptimization2017}.
However,
the shape itself may restrict the class of functions to be approximated.

The difference of LSE (DLSE) network was proposed
to balance the trade-off of expressiveness and shape,
which is highly expressive and exploits convexity~\cite{calafioreUniversalApproximationResult2020}.
That is,
the DLSE network is a universal approximator for continuous functions,
and owing to difference of convex algorithms (DCA),
the DLSE network can obtain a local minimizer with global convergence and linear convergence.
The DLSE network can be written as
\begin{equation}
  \label{eq:dlse}
  f^{\text{DLSE}}(z) = f^{\text{LSE}}_{1}(z) - f^{\text{LSE}}_{2}(z),
\end{equation}
where $f^{\text{LSE}}_{i}$ are the LSE networks for $i \in \{1, 2\}$.
The same temperatures are typically assigned to the LSE networks of
the DLSE network~\cite{calafioreUniversalApproximationResult2020}.

In short,
for the objective function approximation method in amortized optimization,
approximators with high expressiveness, such as FNNs,
are usually non-convex and suffer from local optimality
without good convergence properties.
Some shape-preserving universal approximators exploiting convexity, such as the MA, LSE, PMA, and PLSE networks,
can avoid local optimality.
However,
they can approximate objective functions only with the same shape.
DLSE has high expressiveness with better convergence properties but still suffers from local optimality with more computational cost for the iterations of convex optimization.
The goal of this study is to propose a new objective function approximator for the objective function approximation method in amortized optimization
such that the objective function approximator has high expressiveness with guarantee of obtaining a global minimizer reliably and quickly.
More precisely,
the objective of this study is to propose a method to design universal approximators for continuous functions,
which can obtain a global minimizer by a single convex optimization in parametric optimization settings.

\section{Main results}
\label{sec:main_results}
In this section,
the parameterized convex minorant (PCM) method is proposed.
The PCM method is directed to find a minimizer of a highly expressive objective function approximator by a single convex optimization.

In the PCM method,
an objective function approximator $\hat{f}:X \times U \to \mathbb{R}$ is expressed as follows,
\begin{equation}
  \label{eq:pcm_method}
  \begin{split}
    \hat{f}(x, u)
  &:= f^{\text{PCM}}(x, u) + f^{\text{gap}}(x, u)
  \\
  f^{\text{gap}}(x, u)
  &:= \max\left(0, f^{\text{NN}}(x, u) - f^{\text{NN}}(x, \hat{u}^{\star}(x))\right) \geq 0,
  \end{split}
\end{equation}
where $f^{\text{PCM}}:X \times U \to \mathbb{R}$ is a PCM of the objective function approximator $\hat{f}$ such that
$f^{\text{PCM}}$ is a shape-preserving universal approximator for parameterized convex continuous functions.
A single-valued minimizer function $\hat{u}^{\star}:X \to U$ of the PCM
is utilized such that $\hat{u}^{\star}(x) \in \argmin_{u \in U} f^{\text{PCM}}(x, u) $, $\forall x \in X$.
The auxiliary approximator $f^{\text{NN}}:X \times U \to \mathbb{R}$ in the gap function $f^{\text{gap}}$ is a shape-preserving universal approximator for continuous functions.

\autoref{fig:concept} visualizes the characteristics of different approximators for the objective function approximation method in amortized optimization.
As seen in \autoref{fig:concept},
approximators with high expressiveness and local optimality may fail to find a global minimizer,
for example, FNN and DLSE network.
Approximators with global optimality and restrictive shapes,
such as PLSE networks,
may not suffer from local optimality, however,
these may fail to approximate the objective function with general shapes and may result in a low-quality approximate minimizer.
Unlike existing approximators,
in the proposed PCM method,
the objective function approximator has high expressiveness
and can find a global minimizer with a single convex optimization by using the PCM.

\subsection{Optimality}
In this section,
the optimality of the proposed objective function approximator is investigated.
The following theorem describes that the single-valued minimizer function of the PCM also attains the global minimum of the objective function approximator.
Thus,
a global minimizer of the proposed approximator can be found by a single convex optimization.
\begin{theorem}
  \label{thm:pcm_minimizer}
  In \eqref{eq:pcm_method},
  the single-valued minimizer function $\hat{u}^{\star}: X \to U$ of the parameterized convex minorant $f^{\text{PCM}}$
  also attains the minimum of the objective function approximator $\hat{f}$,
  i.e., $\hat{u}^{\star}(x) \in \argmin_{u \in U} \hat{f}(x, u)$, $\forall x \in X$.
\end{theorem}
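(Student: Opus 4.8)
The key structural fact is that the gap function $f^{\text{gap}}$ vanishes exactly at the PCM's minimizer and is nonnegative everywhere, so $\hat f$ touches its convex minorant $f^{\text{PCM}}$ precisely at $\hat u^{\star}(x)$. The plan is to show two inequalities for an arbitrary fixed $x \in X$: first, $\hat f(x, \hat u^{\star}(x)) = f^{\text{PCM}}(x, \hat u^{\star}(x))$, and second, $f^{\text{PCM}}(x, \hat u^{\star}(x)) \leq \hat f(x, u)$ for every $u \in U$. Chaining these gives $\hat f(x, \hat u^{\star}(x)) \leq \hat f(x, u)$ for all $u \in U$, which is exactly the claim $\hat u^{\star}(x) \in \argmin_{u \in U} \hat f(x, u)$.

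For the first inequality, I would evaluate $f^{\text{gap}}$ at $u = \hat u^{\star}(x)$ directly from its definition in \eqref{eq:pcm_method}: the argument of the max becomes $f^{\text{NN}}(x, \hat u^{\star}(x)) - f^{\text{NN}}(x, \hat u^{\star}(x)) = 0$, so $f^{\text{gap}}(x, \hat u^{\star}(x)) = \max(0, 0) = 0$, and hence $\hat f(x, \hat u^{\star}(x)) = f^{\text{PCM}}(x, \hat u^{\star}(x))$. For the second inequality, I would use that $\hat u^{\star}(x)$ minimizes $f^{\text{PCM}}(x, \cdot)$ over $U$, so $f^{\text{PCM}}(x, \hat u^{\star}(x)) \leq f^{\text{PCM}}(x, u)$ for all $u \in U$; then, since $f^{\text{gap}}(x, u) \geq 0$, we get $f^{\text{PCM}}(x, u) \leq f^{\text{PCM}}(x, u) + f^{\text{gap}}(x, u) = \hat f(x, u)$. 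Combining, $f^{\text{PCM}}(x, \hat u^{\star}(x)) \leq \hat f(x, u)$.

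Putting the pieces together: $\hat f(x, \hat u^{\star}(x)) = f^{\text{PCM}}(x, \hat u^{\star}(x)) \leq \hat f(x, u)$ for every $u \in U$, and since $x$ was arbitrary this holds for all $x \in X$. There is essentially no obstacle here — the argument is a short two-line chain of inequalities, and the only thing to be careful about is that $\hat u^{\star}(x)$ is well-defined as an element of $\argmin_{u \in U} f^{\text{PCM}}(x, u)$, which is guaranteed by compactness of $U$, continuity of $f^{\text{PCM}}(x, \cdot)$, and the assumed single-valued selection. The substantive content of the theorem is not the proof but the observation that this minimization of $f^{\text{PCM}}(x, \cdot)$ is a convex program, which follows from $f^{\text{PCM}}$ being parameterized convex and $U$ being convex.
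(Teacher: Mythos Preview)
Your proof is correct and follows essentially the same approach as the paper: both arguments use that $f^{\text{gap}}\ge 0$ gives $\hat f\ge f^{\text{PCM}}$ everywhere, while $f^{\text{gap}}(x,\hat u^\star(x))=0$ gives equality at the PCM minimizer, so $\hat f(x,\hat u^\star(x))=\min_{u}f^{\text{PCM}}(x,u)\le \hat f(x,u)$. The paper's version is just a slightly more compressed two-line chain of the same inequalities.
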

\begin{proof}
  It is straightforward from \eqref{eq:pcm_method} that
  \begin{equation}
    \hat{f}(x, u)
    \geq f^{\text{PCM}}(x, u)
    \geq \min_{u' \in U} f^{\text{PCM}}(x, u'),
  \end{equation}
 $\forall (x, u) \in X \times U$,
 and for all $x \in X$,
 $\hat{f}(x, \hat{u}^{\star}(x))
 = f^{\text{PCM}}(x, \hat{u}^{\star}(x))
 = \min_{u \in U} f^{\text{PCM}}(x, u)$,
 which concludes the proof.
\end{proof}

The following theorem supports that the accurate objective function approximation
implies sub-optimality of the approximate minimizer obtained from the objective function approximator.
\begin{theorem}[Optimality of the approximate minimizer]
  \label{thm:optimality}
  Given objective function $f: X \times U \to \mathbb{R}$,
  $\epsilon > 0$,
  let us suppose that there exists an objective function approximator $\hat{f}: X \times U \to \mathbb{R}$
  such that $\lVert \hat{f} - f \rVert_{\infty} < \epsilon$.
  Then, for any $x \in X$,
  $f(x, \hat{u}^{\star}) < 2 \epsilon + \min_{u \in U}f(x, u)$
  for any approximate minimizer $\hat{u}^{\star}$
  where $\hat{u}^{\star} \in \argmin_{u \in U} \hat{f}(x, u) $.
\end{theorem}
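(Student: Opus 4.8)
The plan is to chain three elementary inequalities, using only the uniform bound $\lVert \hat{f} - f \rVert_{\infty} < \epsilon$ and the definitions of the two minimizers. First I would fix an arbitrary $x \in X$ and introduce a true minimizer $u^{\star} \in \argmin_{u \in U} f(x, u)$; this exists because $U$ is convex compact and $f$ is continuous, so $\min_{u \in U} f(x, u)$ is attained. The approximate minimizer $\hat{u}^{\star} \in \argmin_{u \in U} \hat{f}(x, u)$ is given by hypothesis, so no existence argument is needed there.

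Next I would write the chain
\begin{equation}
  f(x, \hat{u}^{\star})
  < \hat{f}(x, \hat{u}^{\star}) + \epsilon
  \leq \hat{f}(x, u^{\star}) + \epsilon
  < f(x, u^{\star}) + 2\epsilon
  = \min_{u \in U} f(x, u) + 2\epsilon,
\end{equation}
where the first and third steps use $\lvert \hat{f}(x, \cdot) - f(x, \cdot) \rvert < \epsilon$ pointwise (a consequence of $\lVert \hat{f} - f \rVert_{\infty} < \epsilon$), the second step uses that $\hat{u}^{\star}$ minimizes $\hat{f}(x, \cdot)$ over $U$ so $\hat{f}(x, \hat{u}^{\star}) \leq \hat{f}(x, u^{\star})$, and the final equality is the definition of $u^{\star}$. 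Since $x \in X$ was arbitrary, this gives the claim for all $x \in X$.

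There is essentially no obstacle here: the argument is a two-sided sandwiching of $\hat{f}$ by $f \pm \epsilon$ combined with optimality of $\hat{u}^{\star}$ for $\hat{f}$. The only points worth stating explicitly are that the minima over $U$ are attained (compactness of $U$ and continuity of $f$, already standing assumptions in the paper) and that the strict inequality in $\lVert \hat{f} - f \rVert_{\infty} < \epsilon$ propagates to the strict $2\epsilon$ bound. I would keep the proof to the displayed inequality chain plus one sentence of justification for each step.
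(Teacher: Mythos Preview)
Your proposal is correct and is essentially the same argument as the paper's proof: both use the uniform $\epsilon$-bound at $\hat{u}^{\star}$ and at a true minimizer $u^{\star}$, together with the optimality of $\hat{u}^{\star}$ for $\hat{f}(x,\cdot)$, to obtain the $2\epsilon$ sub-optimality bound. Your chained inequality is in fact a slightly cleaner presentation than the paper's, which first bounds $\bigl(\hat{f}(x,u^{\star})-f(x,u^{\star})\bigr)-\bigl(\hat{f}(x,\hat{u}^{\star})-f(x,\hat{u}^{\star})\bigr)<2\epsilon$ and then rearranges, but the content is identical.
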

\begin{proof}
  The following proof is borrowed from~\cite[Secion III.C]{calafioreEfficientModelFreeQFactor2020}
  and presented here for completeness.
  It can be deduced from the assumption in \autoref{thm:optimality} that
  for all $x \in X$,
  $ \left(\hat{f}(x, u^{\star}) - f(x, u^{\star})\right) - \left(\hat{f}(x, \hat{u}^{\star}) - f(x, \hat{u}^{\star})\right) < \epsilon + \epsilon = 2 \epsilon$
  for any $u^{\star} \in \argmin_{u \in U} f(x, u) $.
  Because $\hat{u}^{\star}$ is the minimizer of $\hat{f}(x, \cdot)$,
  this implies that
  $f(x, \hat{u}^{\star}) < 2 \epsilon + f(x, u^{\star}) + \left(\hat{f}(x, \hat{u}^{\star}) - \hat{f}(x, u^{\star})\right) \leq 2 \epsilon + f(x, u^{\star}) + 0 = 2 \epsilon + \min_{u \in U} f(x, u) $,
  which concludes the proof.
\end{proof}
Combining \autoref{thm:pcm_minimizer} and \autoref{thm:optimality} implies in the proposed method that
an approximate global minimizer can be obtained from the objective function approximator \eqref{eq:pcm_method} by minimizing the PCM instead.

\subsection{Universal approximation theorem}
In this section,
the universal approximation theorem of the objective function approximator
is established in the PCM method.

The following theorem is the universal approximation theorem of the objective function approximator in the proposed PCM method.
\begin{theorem}[Universal approximation theorem in the PCM method]
  \label{thm:universal_approximation_theorem}
  Given continuous function $f: X \times U \to \mathbb{R}$,
  for any $\epsilon > 0$,
  there exists an approximator $\hat{f}: X \times U \to \mathbb{R}$ in the form of \eqref{eq:pcm_method}
  such that $\lVert \hat{f} - f \rVert_{\infty} < \epsilon$.
\end{theorem}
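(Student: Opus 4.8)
The plan is to construct the three ingredients of \eqref{eq:pcm_method} so that $\hat{f}$ tracks $f$ uniformly, exploiting \aref{assumption:regularity_condition} to handle the combinatorial tension between ``$f^{\text{PCM}}$ must be parameterized convex'' and ``$\hat{f}$ must match an arbitrary continuous $f$.'' First I would fix a continuous target $f$ and $\epsilon > 0$, and set $\delta := \epsilon/3$ (the precise split of $\epsilon$ among the approximation errors can be tuned at the end). The key observation is that the identity $\hat{f} = f^{\text{PCM}} + f^{\text{gap}}$ with $f^{\text{gap}} = \max(0, f^{\text{NN}}(x,u) - f^{\text{NN}}(x, \hat{u}^{\star}(x)))$ gives us a lot of freedom: along the minimizer section $u = \hat{u}^{\star}(x)$ the gap vanishes, so there $\hat{f}(x, \hat{u}^{\star}(x)) = f^{\text{PCM}}(x, \hat{u}^{\star}(x)) = \min_u f^{\text{PCM}}(x,u)$. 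Thus $f^{\text{PCM}}$ is forced to carry the value of $f$ near its own minimizer, while $f^{\text{NN}}$ must reconstruct the ``shape'' of $f$ away from that section.

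The natural construction is as follows. By \aref{assumption:regularity_condition} applied to $f$, pick a continuous single-valued $h_{\eta}: X \to U$ that is an $\eta$-selection of $U^{\star}$ for a small $\eta$ to be chosen; intuitively $h_{\eta}(x)$ approximates a true minimizer of $f(x, \cdot)$. Now choose $f^{\text{PCM}}$ to be a shape-preserving universal approximator for parameterized convex continuous functions whose minimizer section is (approximately) $h_{\eta}$ and whose value along that section is (approximately) $\min_u f(x,u) = f(x, u^{\star}(x))$ — for instance a ``bowl'' in $u$ anchored at $h_{\eta}(x)$ with minimum value $\approx \min_u f(x,u)$; the PLSE/PMA universality guarantees such a $f^{\text{PCM}}$ exists with the desired properties up to error $\delta$, and by taking the bowl sufficiently steep we may also assume $f^{\text{PCM}}(x,u) \le f(x,u) + \delta$ fails nowhere is \emph{not} needed — what we need is only that $f^{\text{PCM}} \le \hat{f}$, which is automatic, plus closeness of $\hat{f}$ to $f$. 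Then set $f^{\text{NN}}$ to be an FNN (universal approximator for continuous functions) with $\lVert f^{\text{NN}} - (f - f^{\text{PCM}}) \rVert_{\infty} < \delta$; since $f - f^{\text{PCM}}$ is continuous on the compact set $X \times U$ this is possible. The point of the $\max(0, \cdot)$ and the subtraction of $f^{\text{NN}}(x, \hat{u}^{\star}(x))$ is precisely to repair the discrepancy at the minimizer section: one checks that $\hat{f} = f^{\text{PCM}} + \max(0, f^{\text{NN}} - f^{\text{NN}}(x,\hat{u}^{\star}(x)))$ differs from $f^{\text{PCM}} + (f - f^{\text{PCM}}) = f$ by at most the error in $f^{\text{NN}}$ plus the error incurred at the anchor point, all controlled by $\delta$ and $\eta$.

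The estimate then runs: for $(x,u)$,
\[
  \lvert \hat{f}(x,u) - f(x,u) \rvert
  \le \lvert f^{\text{gap}}(x,u) - (f(x,u) - f^{\text{PCM}}(x,u)) \rvert,
\]
and $f(x,u) - f^{\text{PCM}}(x,u) \ge 0$ can be arranged (or nearly so) by choosing $f^{\text{PCM}}$ as a genuine minorant up to $\delta$, so the $\max(0,\cdot)$ is essentially inactive where it matters; what remains is $\lvert f^{\text{NN}}(x,u) - (f(x,u)-f^{\text{PCM}}(x,u)) \rvert + \lvert f^{\text{NN}}(x, \hat{u}^{\star}(x)) \rvert$, the first term $< \delta$ by construction and the second $< \delta$ because $f^{\text{NN}}$ approximates $f - f^{\text{PCM}}$ which vanishes (to within $\delta$) at the minimizer section by the anchoring of $f^{\text{PCM}}$'s value to $\min_u f$. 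Summing, $\lVert \hat{f} - f \rVert_{\infty} < 3\delta = \epsilon$. The main obstacle I anticipate is the \emph{simultaneous} control of two things through the single free selection $h_{\eta}$: that $\hat{u}^{\star}(x)$ (the true minimizer of the constructed $f^{\text{PCM}}$, not $h_{\eta}$ itself) stays close enough to where $f$ is small, and that the constructed parameterized-convex bowl does not overshoot $f$ elsewhere on $U$ while still being a legitimate PLSE/PMA-type network. This is exactly where \aref{assumption:regularity_condition} (continuous approximate selection of $U^{\star}$) is essential — without a continuous single-valued section near the argmin, no parameterized convex minorant can have its minimizer track the minimizer of $f$ continuously, and the gap function could not be made small along that section. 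The remaining steps — invoking universality of PMA/PLSE for $f^{\text{PCM}}$ and of FNN for $f^{\text{NN}}$, and the $\epsilon/3$ bookkeeping — are routine.
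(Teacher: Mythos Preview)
Your outline has the right architecture—anchor $f^{\text{PCM}}$ at a continuous approximate selection $h_\eta$ of $U^\star$, let $f^{\text{NN}}$ learn $f-f^{\text{PCM}}$, bound the two residual terms—and you correctly name the central obstacle at the end. But the resolution you sketch is internally inconsistent, and that inconsistency is exactly where the proof lives. You need $f^{\text{PCM}}$ to do two things simultaneously: (a) its \emph{actual} minimizer $\hat{u}^\star(x)$ must land near a minimizer of $f(x,\cdot)$, so that $(f-f^{\text{PCM}})(x,\hat{u}^\star(x))\approx 0$ and hence $|f^{\text{NN}}(x,\hat{u}^\star(x))|$ is small; and (b) $f^{\text{PCM}}\lesssim f$ on all of $X\times U$, so the $\max(0,\cdot)$ does not clip. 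You first invoke a ``sufficiently steep'' bowl to secure (a), but a steep bowl overshoots $f$ away from $h_\eta(x)$, $f-f^{\text{PCM}}$ becomes very negative there, the $\max$ returns $0$, and $\hat{f}$ collapses to the bowl rather than to $f$. A few lines later you switch to ``$f^{\text{PCM}}$ a genuine minorant'' to secure (b); but an arbitrary parameterized convex minorant with the right minimum value can have its minimizer anywhere in the convex hull of $U^\star(x)$. If $f(x,\cdot)$ has two separated global minima and $\hat{u}^\star(x)$ sits at their midpoint where $f$ is large, then $f^{\text{NN}}(x,\hat{u}^\star(x))$ is large, and at every $u$ with $(f-f^{\text{PCM}})(x,u)<f^{\text{NN}}(x,\hat{u}^\star(x))$ the $\max$ clips to $0$ and $\hat{f}(x,u)=f^{\text{PCM}}(x,u)\neq f(x,u)$ with uncontrolled error. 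A single $\epsilon/3$ split cannot reconcile these.

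The paper closes this gap with a specific target and a two-scale argument. It first proves (\lref{lemma:pconv_is_parameterized_convex}--\lref{lemma:pgcm_minimizer_minimum}) that the parameterized greatest convex minorant $\text{pconv}f$ is parameterized convex \emph{and continuous} and shares minimum value and minimizers with $f$; continuity in $x$ is not automatic and needs its own lemma. It then sets the target for $f^{\text{PCM}}$ to be
\[
\overline{\text{pconv}f}(x,u)=\max\bigl(\text{pconv}f(x,u),\,\text{pconv}f(x,h_\delta(x))\bigr)+\gamma\,\lVert u-h_\delta(x)\rVert
\]
with a \emph{small} slope $\gamma>0$. The first term keeps this within $O(\epsilon_1)+\gamma\,\text{diam}(U)$ of $\text{pconv}f$, hence a near-minorant of $f$ (securing~(b)); the $\gamma$-term forces every minimizer of the approximating $f^{\text{PCM}}$ to lie within $2\epsilon_2/\gamma$ of $h_\delta(x)$, where $\epsilon_2$ is the PCM approximation error (securing~(a)). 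The crucial point is the hierarchy of scales: take $\gamma$ of order $\epsilon/\text{diam}(U)$, and then take $\epsilon_2$ small enough that $\epsilon_2/\gamma$ is itself small enough to feed into the uniform continuity of $f-\overline{\text{pconv}f}$. Your ``bowl'' instinct is close—indeed even the simpler target $\min_v f(x,v)+\gamma\lVert u-h_\eta(x)\rVert$ would work—but the slope must be \emph{gentle}, not steep, and the bookkeeping must respect $\epsilon_2\ll\gamma\ll\epsilon$ rather than a flat $\epsilon/3$. Once you supply that construction and scaling, the rest of your sketch goes through.
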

\begin{proof}
  See \hyperref[sec:proof_of_uat]{Appendix A}.
\end{proof}
\autoref{thm:universal_approximation_theorem} supports that
the proposed objective function approximator has high expressiveness as a universal approximator for continuous functions.

\subsection{Implementation guidelines}
Two problems remain when implementing the objective function approximator in the proposed PCM method:
i) How can the single-valued minimizer function $\hat{u}^{\star}: X \to U$ of the PCM in \eqref{eq:pcm_method} be found?
ii) How can the objective function approximator $\hat{f}$ in \eqref{eq:pcm_method} be trained?
In this study,
to realize the PCM method,
the PCM and gap function are explicitly parameterized as follows:
\begin{equation}
  \begin{split}
    \hat{f}_{\theta}(x, u)
  &= f_{\theta_{1}}^{\text{PCM}}(x, u)
  \\
  &+ \max\left(0, f_{\theta_{2}}^{\text{NN}}(x, u) - f_{\theta_{2}}^{\text{NN}}(x, \hat{u}^{\star}(x; \theta_{1}))\right),
  \end{split}
\end{equation}
where $\theta_{1}$ and $\theta_{2}$ are the network parameters of the PCM $f^{\text{PCM}}$ and the auxiliary function $f^{\text{NN}}$, respectively.
Therefore,
the network parameters of the objective function approximator can be expressed as
$\theta = (\theta_{1}, \theta_{2})$.
In this study,
the parameters $x$ and $\theta$ are distinguished as parameter and network parameters, respectively.

To find the single-valued minimizer function $\hat{u}^{\star}$ in \eqref{eq:pcm_method},
any convex optimization solver minimizing the PCM can be used,
for example, ECOS~\cite{domahidiECOSSOCPSolver2013}.
In this case,
the value of the minimizer depends on the network parameter of the PCM $f^{\text{PCM}}$,
and the minimizer can explicitly be expressed as $\hat{u}^{\star}(x; \theta_{1})$ for given parameter $x \in X$.

In deep learning,
gradient-based methods such as ADAM~\cite{kingmaAdamMethodStochastic2017} are widely used to train approximators.
The gradient of the output of the objective function approximator in \eqref{eq:pcm_method} with respect to the network parameters can be obtained by using the following gradients with chain rules:
$\nabla_{\theta_{1}} f_{\theta_{1}}^{\text{PCM}}(x, u)$,
$\nabla_{\theta_{2}} f_{\theta_{2}}^{\text{NN}}(x, u)$,
$\nabla_{u} f_{\theta_{2}}^{\text{NN}}(x, u)$,
and $\nabla_{\theta_{1}} \hat{u}^{\star}(x; \theta_{1}) $.
Other gradients can be calculated via the backpropagation using automatic differentiation tools.
The challenging part is the gradient of the minimizer function with respect to the network parameter of the PCM, $\nabla_{\theta_{1}} \hat{u}^{\star}(x; \theta_{1})$.
Recent advances in automatic differentiation enable us to calculate $\nabla_{\theta_{1}} \hat{u}^{\star}(x; \theta_{1})$, including differentiable convex optimization layers~\cite{agrawalDifferentiableConvexOptimization2019} and automatic implicit differentiation~\cite{blondelEfficientModularImplicit2022}.
For example,
differentiable convex optimization layers require that the corresponding optimization problem be expressed in disciplined parameterized programming (DPP) and that the minimizer is unique.
It is challenging to guarantee that the PCM has a unique minimizer for any parameter $x \in X$.
In this study, a modified PLSE network, \textit{PLSE+} network, is proposed to mitigate this challenge.
Given PLSE network $f^{\text{PLSE}}$ in \eqref{eq:plse},
the corresponding PLSE+ network $f^{\text{PLSE+}}$ nullifies $a_{\alpha_{1}}$, i.e., $a_{\alpha_{1}}(x) \equiv 0$.
This modification makes the network be \textit{likely strictly convex} in statistical settings~\cite{nielsenMonteCarloInformation2018},
which implies the uniqueness of the minimizer.
Additionally,
it is straightforward that the minimization of the PLSE(+) network is DPP
because the parameterized log-sum-exp problem is DPP, similar to~\cite[Theorem 1]{kimOfflineDifferentiableQlearning2022}.
To realize the PCM method,
the \textit{extended PLSE} (EPLSE) network is proposed as follows,
\begin{equation}
  \begin{split}
    \label{eq:eplse}
    f^{\text{EPLSE}}(x, u)
  &= f^{\text{PLSE+}}(x, u)
  \\
  &+ \max\left(0, f^{\text{NN}}(x, u) - f^{\text{NN}}(x, \hat{u}^{\star}(x))\right).
  \end{split}
\end{equation}

It is straightforward to show that PLSE+ network is parameterized convex continuous.
The PLSE+ network is a shape-preserving universal approximator for parameterized convex continuous functions by the following theorem.
\begin{theorem}[Universal approximation theorem of PLSE+ network]
  \label{thm:uat_of_plse_plus}
  Given function $f:X \times U \to \mathbb{R}$ parameterized convex continuous,
  for all $\epsilon > 0$,
  there exists a PLSE+ network $f^{\textup{PLSE+}}$ such that
  $\lVert f^{\textup{PLSE+}} - f\rVert_{\infty} < \epsilon $.
\end{theorem}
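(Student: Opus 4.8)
The plan is to reduce the statement to the already-established universal approximation theorem for the unmodified PLSE network~\cite{kimParameterizedConvexUniversal2022b}, by showing that the forced constraint $a_{\alpha_{1}} \equiv 0$ costs essentially nothing: the nullified affine term can be turned into a negligible ``floor'' term that perturbs the log-sum-exp by an arbitrarily small amount, uniformly over the compact domain $X \times U$.

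First I would invoke the UAT for PLSE: given the parameterized convex continuous $f$ and $\epsilon > 0$, there exists a PLSE network $g$, with some number $J$ of affine terms and some temperature $T > 0$, such that $\lVert g - f \rVert_{\infty} < \epsilon/2$. Since $g$ is continuous and $X \times U$ is compact, $g$ is bounded, so there is $M > 0$ with $\lVert g \rVert_{\infty} \le M$. Next I would build a PLSE+ network with $I = J+1$ terms: reindex the $J$ affine pairs of $g$ as the terms $i = 2, \ldots, I$ (keeping the temperature $T$), so that $\sum_{i=2}^{I} \exp\!\big((\langle a_{\alpha_{i}}(x), u\rangle + b_{\beta_{i}}(x))/T\big) = \exp(g(x,u)/T)$; and take the mandatory first term to have $a_{\alpha_{1}} \equiv 0$ together with $b_{\beta_{1}}$ equal to (or approximating within the chosen approximator family for the $b$-maps) a constant $C$ to be sent to $-\infty$.

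With this construction $f^{\text{PLSE+}}(x,u) = T \log\!\big(\exp(C/T) + \exp(g(x,u)/T)\big)$, and by monotonicity of the log-sum-exp operator, $g(x,u) \le f^{\text{PLSE+}}(x,u) \le g(x,u) + T \log\!\big(1 + \exp((C - g(x,u))/T)\big) \le g(x,u) + T \log\!\big(1 + \exp((C + M)/T)\big)$. Choosing $C$ negative enough that $T \log(1 + \exp((C + M)/T)) < \epsilon/2$ — which is possible because $\exp(\epsilon/(2T)) > 1$ — gives $\lVert f^{\text{PLSE+}} - g \rVert_{\infty} < \epsilon/2$, and the triangle inequality then yields $\lVert f^{\text{PLSE+}} - f \rVert_{\infty} < \epsilon$, as desired.

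The only genuinely load-bearing step is the uniform smallness of the floor-term perturbation, which is precisely where compactness of $X \times U$ (hence boundedness of $g$) enters; everything else is bookkeeping. A minor point to dispatch is that the $b$-map approximator family must be able to realize, or approximate arbitrarily well, the constant $C$, which is immediate since constants are continuous and the $b$-maps are universal approximators for continuous functions, and any residual slack there is simply absorbed into the two $\epsilon/2$ budgets.
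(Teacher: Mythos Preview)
Your argument is correct. The reduction to the PLSE universal approximation theorem via a ``floor'' term is sound: compactness of $X \times U$ gives the uniform bound $M$ on $g$, the log-sum-exp monotonicity bound is right, and the constant $C$ can indeed be realized (or approximated to any tolerance) by the continuous-function approximator family used for the $b$-maps, with the slack absorbed as you indicate.

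The paper takes a different route. Rather than invoking the PLSE theorem as a black box and appending a negligible term, the paper re-opens the constructive proof of the PLSE universal approximation theorem from~\cite{kimParameterizedConvexUniversal2022b} and modifies the first auxiliary affine function to be $f_{1}^{\text{aux}}(x,u) = f^{\star}(x) := \min_{u' \in U} f(x,u')$, which is independent of $u$ and hence automatically has $a_{1} \equiv 0$; continuity of $f^{\star}$ comes from Berge's maximum theorem, and since $f^{\star}(x) \le f(x,u)$ this term is an underestimator that does not disturb the pointwise convergence of the running max to $f$. The remaining equicontinuity and convergence arguments then go through as in the original proof. Your approach is more modular and elementary --- it avoids revisiting the internals of the cited proof --- while the paper's approach gives the nullified term a meaningful role (it encodes the minimum value function), which is conceptually tidy but not strictly needed for the approximation statement itself.
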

\begin{proof}
  See \hyperref[sec:proof_of_uat_plseplus]{Appendix B}.
\end{proof}

\section{Numerical simulation}
\label{sec:numerical_simulation}
In this section,
the proposed objective function approximator is compared to other approximators
for amortized optimization with applications to
i) parameterized non-convex objective function approximation
and ii) learning-based nonlinear model predictive control (MPC).
Data are randomly sampled and split into training, validation, and test datasets.
For numerical simulation,
the hyperparameters of each approximator are set as $I = 20$ and $T = 1$ for DLSE, PLSE, and the PLSE+ of EPLSE networks,
and hidden layer nodes of $(64, 64)$ for the FNN and the auxiliary universal approximator $f^{\text{NN}}$ of the EPLSE network in \eqref{eq:eplse}.
Simulations were performed on a desktop with an AMD Ryzen 9 5900X.
Approximators are trained with the training data by mini-batch supervised learning by using optimizer Adam~\cite{kingmaAdamMethodStochastic2017}.
The training epochs are $200$ for parameterized non-convex objective function approximation and learning-based nonlinear MPC.
For the convex optimization of PLSE, DLSE, and EPLSE,
ECOS is used~\cite{domahidiECOSSOCPSolver2013}.
For the non-convex optimization of the FNN,
an interior-point Newton method is used~\cite{kmogensenOptimMathematicalOptimization2018}.
At each epoch,
the loss of the approximator is evaluated over validation data, and the approximator with the smallest validation loss is saved as the best approximator.
Test datasets are used to evaluate the best approximators after training.
The code is publicly available\footnote{https://github.com/JinraeKim/PCMAO}.

\subsection{Case 1: Parameterized non-convex objective function approximation}
For the parameterized non-convex objective function approximation,
an example~\cite[Example 3]{calafioreUniversalApproximationResult2020} is modified as the following function $f:X \times U \to \mathbb{R}$,
\begin{equation}
  f(x, u) = x^{2} + u^{2} + \sin(2 \pi u),
\end{equation}
where the parameter and optimization variable sets are set as $X = [-1, 1]$ and $U = [-1, 1]$, respectively.
The objective function $f$ is parameterized non-convex.
Simulation settings are summarized in \autoref{table:settings}.

\begin{table}[!t]
    \centering
    \begin{tabular}{l|c|c|c}
        \hline
        & $\#$ of data$^{*}$ & learning rate (lr) & dimensions of ($x, u$)
        \\   
        \hline
        \hline
      \texttt{Case 1} & $2\text{,}000$ & $10^{-3}$ (or $10^{-1}$)$^{**}$ & $(1, 1)$
        \\        
        \hline
      \texttt{Case 2} & $10\text{,}000$ & $10^{-3}$ & $(4, 5)$
        \\
        \hline
        \hline
        \multicolumn{4}{l}{%
          \begin{minipage}{6cm}%
            $^{*}$: train:valid:test = 0.7:0.2:0.1, mini-batch size=16
            \\
            $^{**}$: DLSE with lr of $10^{-3}$ showed too slow training progress.
          \end{minipage}%
        }\\
    \end{tabular}
    \caption{Simulation settings}
    \label{table:settings}
\end{table}

\begin{table*}[!t]
    \centering
    \begin{tabular}{l|c|c|c|c}
        \hline
        & FNN & PLSE & DLSE & EPLSE (proposed)
        \\   
        \hline
        \hline
      Mean minimizer errors$^{*}$ $\downarrow$ & $0.4155$ & $0.1207$ & $0.4080$ & $\textbf{0.0077}$
        \\        
        \hline
      Mean minimum value errors$^{**}$ $\downarrow$ & $0.3229$ & $0.8489$ & $0.3224$ & $\textbf{0.0180}$
        \\
        \hline
      Mean solve time [s] $\downarrow$ & $\textbf{0.0025}$ & $0.0030$ & $0.0080$ & $0.0029$
      \\
        \hline
        \hline
        \multicolumn{5}{l}{%
          \begin{minipage}{10cm}%
            $^{*}$: $l_{2}$-norm
            \\
            $^{**}$: absolute value
          \end{minipage}%
        }\\
    \end{tabular}
    \caption{Simulation results (Case 1)}
    \label{table:results_case1}
\end{table*}
\begin{table*}[!t]
    \centering
    \begin{tabular}{l|c|c|c|c|c}
        \hline
        & FNN & PLSE & DLSE & EPLSE (proposed) & linear MPC
        \\   
        \hline
        \hline
      $\lvert \phi(t_{f}) - \phi_{d} \rvert$ [deg] $\downarrow$ & $0.9345$ & $1.4485$ & $0.7311$ & $\textbf{0.1286}$ & $0.9034$
        \\
        \hline
      $\lvert \dot{\phi}(t_{f}) - \dot{\phi}_{d} \rvert$ [deg] $\downarrow$ & $5.8360$ & $0.0000$ & $0.6037$ & $0.0001$ & $0.0739$
        \\
        \hline
      Mean of $J(x, \hat{u}^{\star})$ $\downarrow$ & $0.0812$ & $\textbf{0.0760}$ & $0.0761$ & $0.0765$ & $0.0832$
        \\
        \hline
      Mean solve time [s] $\downarrow$ & $0.0151$ & $0.0034$ & $0.0802$ & $0.0062$ & $\textbf{0.0025}$
      \\
        \hline
        \hline
    \end{tabular}
    \caption{Simulation results (Case 2)}
    \label{table:results_case2}
\end{table*}

\begin{figure*}
  \centering
  \begin{subfigure}[b]{0.48\linewidth}
    \centering
    \includegraphics[width=1.0\linewidth]{./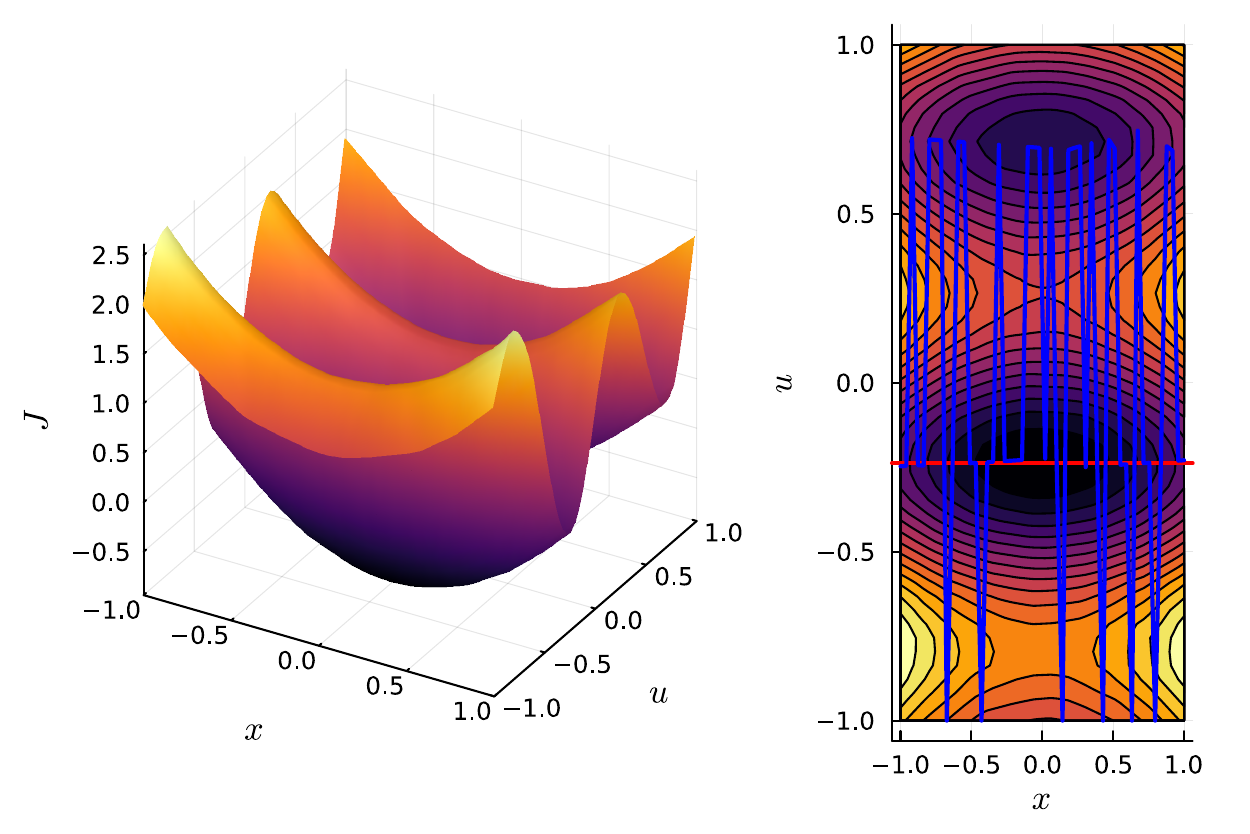}
    \caption{FNN}
  \end{subfigure}
  \begin{subfigure}[b]{0.48\linewidth}
    \centering
    \includegraphics[width=1.0\linewidth]{./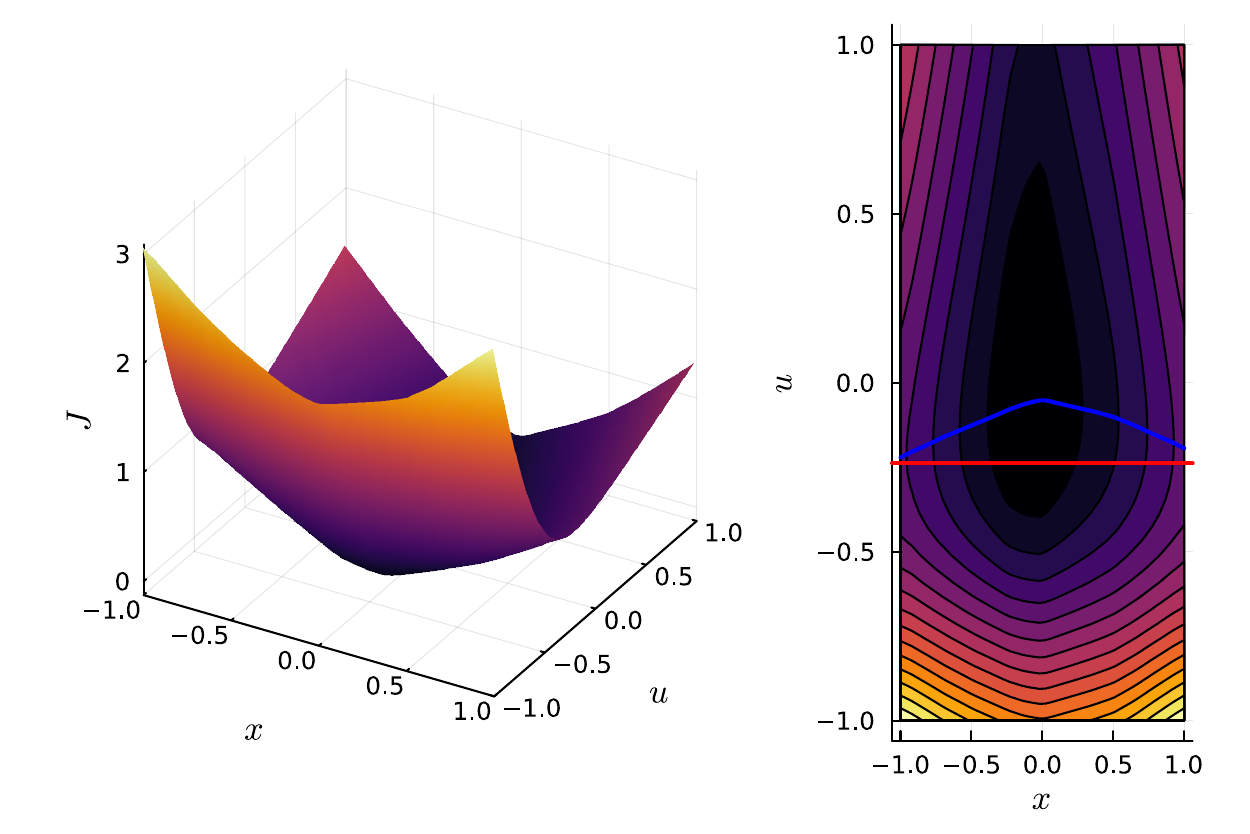}
    \caption{PLSE}
  \end{subfigure}
  \begin{subfigure}[b]{0.48\linewidth}
    \centering
    \includegraphics[width=1.0\linewidth]{./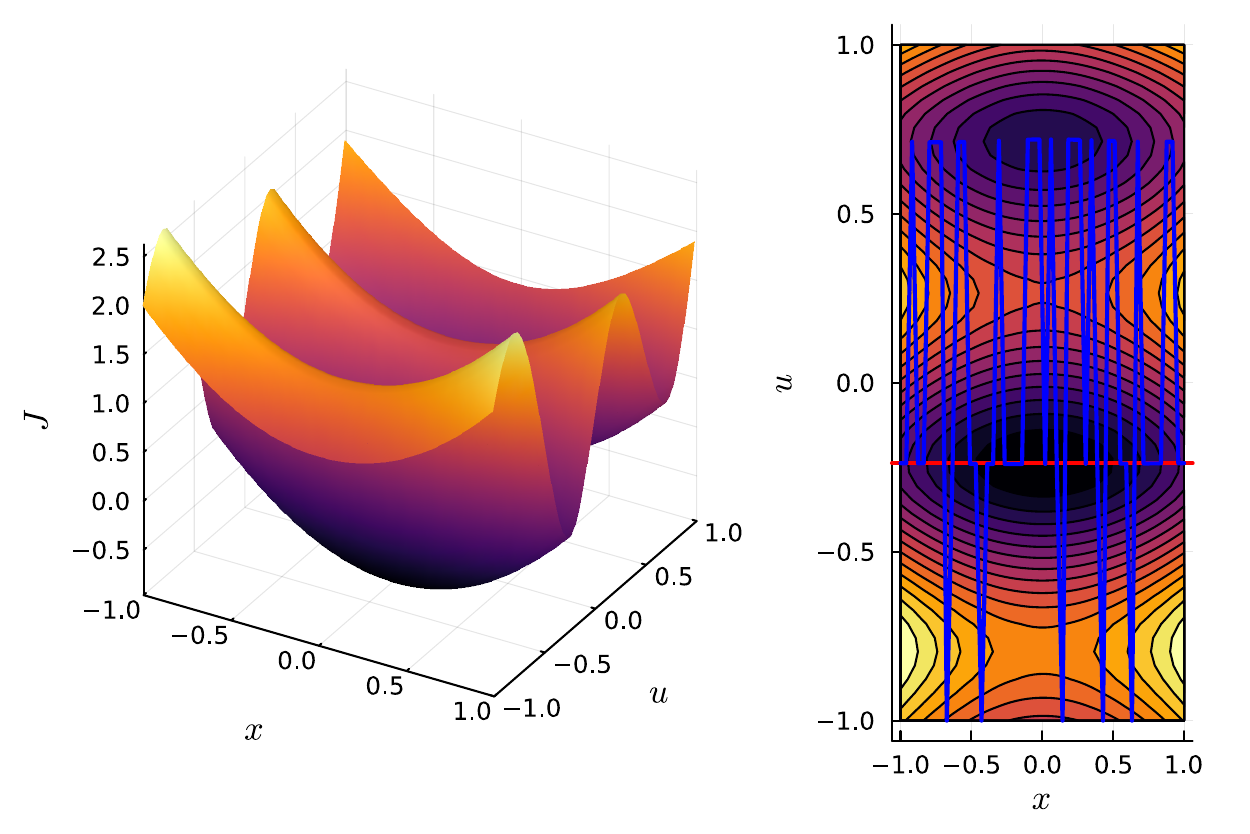}
    \caption{DLSE}
  \end{subfigure}
  \begin{subfigure}[b]{0.48\linewidth}
    \centering
    \includegraphics[width=1.0\linewidth]{./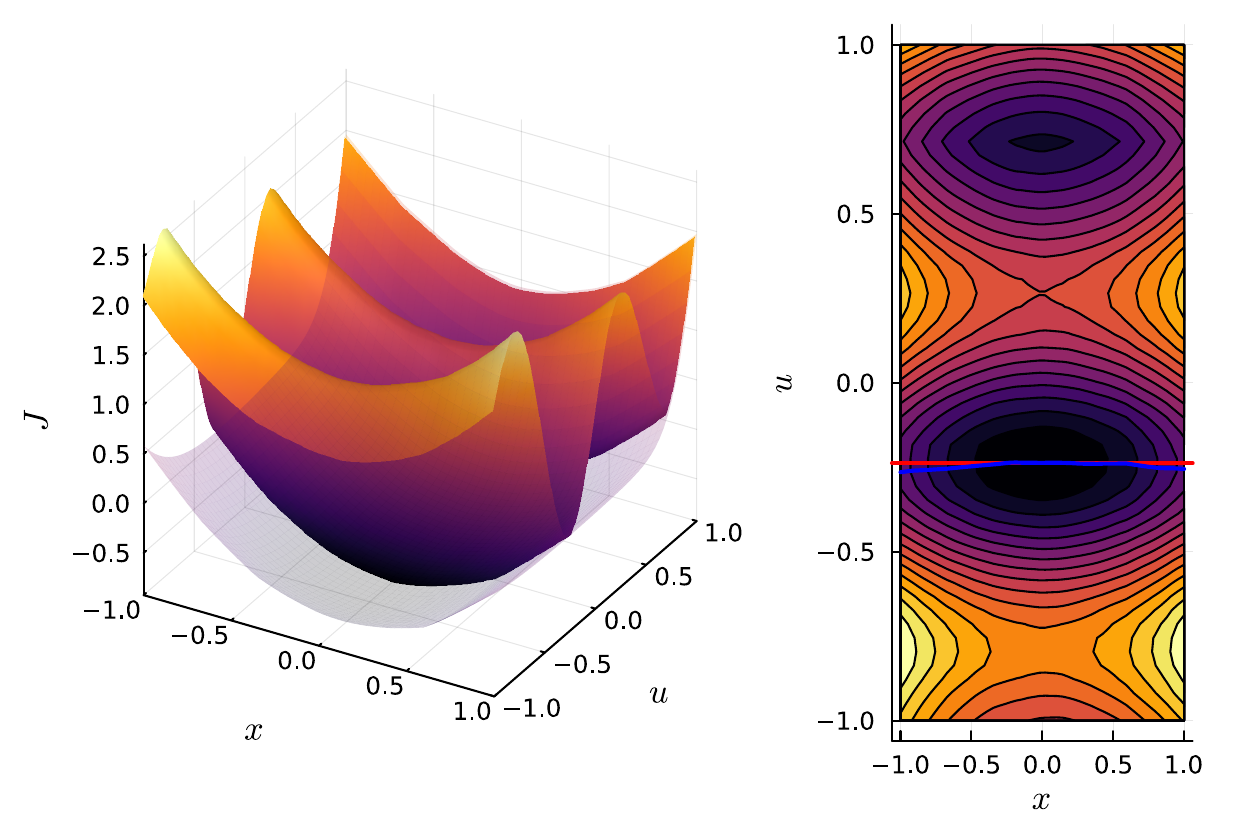}
    \caption{EPLSE (proposed) with PCM in gray}
  \end{subfigure}
  \begin{subfigure}[b]{0.48\linewidth}
    \centering
    \includegraphics[width=1.0\linewidth]{./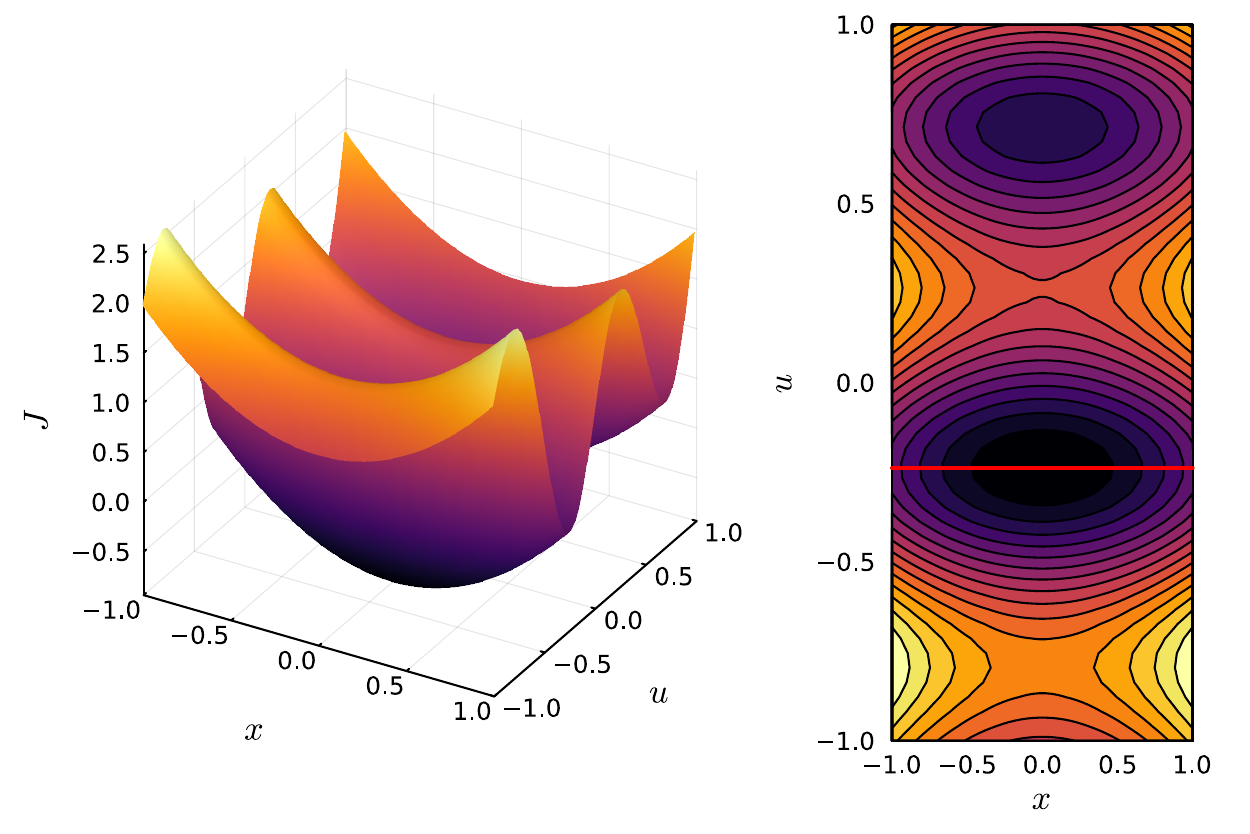}
    \caption{Target function}
  \end{subfigure}
  \caption{Surface and contour visualization (Case 1).
  The red line denotes the true minimizer.
The blue line denotes the approximate minimizer obtained from each approximator.}
  \label{fig:simple_func_results}
\end{figure*}

\autoref{fig:simple_func_results} shows the target objective function and approximation results of FNN, PLSE, DLSE, and the proposed approximator.
FNN and DLSE are continuous universal approximators,
and therefore,
they can approximate the target function well.
However, FNN and DLSE suffer from local optimality,
which results in chattering of the approximate minimizer depending on the parameter.
The approximate minimizer of the PLSE network does not show a chattering phenomenon.
However, PLSE is a shape-preserving universal approximator for parameterized convex functions,
and therefore,
PLSE poorly approximates the parameterized non-convex objective function,
resulting in low-quality optimization.
However,
the proposed approximator, EPLSE network, approximates the target objective function with high precision as well as performant approximate minimizer without chattering.
This result supports that the proposed approximator can approximate a parameterized non-convex objective function
and can retrieve the global minimizer reliably.
The minimizer errors and minimum value errors over the test dataset are summarized in \autoref{table:results_case1}.

\subsection{Case 2: Learning-based nonlinear model predictive control}
In this section,
the proposed method is demonstrated with a learning-based control application, nonlinear model predictive control (NMPC).
Given continuous-time dynamical system $\dot{\textbf{x}}(t) = f_{\text{cont}}(\textbf{x}(t), \textbf{u}(t))$,
the corresponding discrete-time dynamical system using zero-order-hold scheme with time step $\Delta t$ can be written as
$\textbf{x}_{n+1} = f_{\text{disc}}(\textbf{x}_{n}, \textbf{u}_{n})$ where $(\cdot)_{n} := (\cdot)(n \Delta t)$.
$\textbf{x}(t) \in \mathbb{X}$ and $\textbf{u}(t) \in \mathbb{U}$ denote the state and input at time $t$, respectively.

Given initial state $\textbf{x}_{0} \in \mathbb{X}_{0}$
and setpoint $\textbf{x}_{d} \in \mathbb{X}_{d}$ (with abuse of notation),
finite-horizon cost function can be defined as the objective function of parametric optimization as follows,
\begin{equation}
  \begin{split}
    f(x, u)
    &= (\textbf{x}_{N} - \textbf{x}_{d})^{\intercal} Q_{N} (\textbf{x}_{N} - \textbf{x}_{d})
    \\
    &+ \sum_{n=0}^{N-1} \left((\textbf{x}_{n} - \textbf{x}_{d})^{\intercal} Q (\textbf{x}_{n} - \textbf{x}_{d})
    + \textbf{u}_{n}^{\intercal} R \textbf{u}_{n}\right),
  \end{split}
\end{equation}
where parameter and optimization variable are set as
$x = [\textbf{x}_{0}^{\intercal}, \textbf{x}_{d}^{\intercal}]^{\intercal}$
and $u = [\textbf{u}_{0}^{\intercal}, \ldots, \textbf{u}_{N-1}^{\intercal}]^{\intercal}$, respectively.
The objective function $f$ is not parameterized convex in general due to the nonlinearity of the discrete-time dynamics $f_{\text{disc}}$.
The goal of learning-based NMPC is to find the approximate minimizer $u^{\star} = [(\textbf{u}_{0}^{\star})^{\intercal}, \ldots, (\textbf{u}_{N-1}^{\star})^{\intercal}]^{\intercal}$ minimizing the objective function $f(x, \cdot)$ for given parameter $x$,
and then apply the optimal input at first time step $\textbf{u}_{0}^{\star}$ at each time instant.

For the demonstration,
a wing-rock model for delta-wing aircraft is used~\cite{tarnFuzzyControlWing1993}.
The continuous-time dynamics of the wing-rock model can be written as
\begin{equation}
  \ddot{\phi} + \omega^{2} \phi = \mu_{1} \dot{\phi} + b_{1} \phi^{3} + \mu_{2} \phi^{2} \dot{\phi} + b_{2} \phi \dot{\phi}^{2} + \delta_{g},
\end{equation}
where $\phi(t)$ and $\delta_{g}(t)$ is the roll angle and generalized control surface deflection at time $t$,
and $b_{1}, b_{2}, \mu_{1}, \mu_{2}, \omega $ are constant parameters.
The state and input variables can be defined for state-space representation as
$\mathbf{x}(t) := [\phi(t), \dot{\phi}(t)]^{\intercal}$ and $\mathbf{u}(t) := \delta_{g}(t)$.
The initial state and setpoint sets are given as
$\mathbb{X}_{0} := [-25, 25] \text{(deg)} \times [-50, 50] \text{(deg/s)}$
and $\mathbb{X}_{d} := [-25, 25] \text{(deg)} \times \{0\} \text{(deg/s)}$,
that is, the parameter space is $X := \mathbb{X}_{0} \times \mathbb{X}_{d}$.
The input space is given as $\mathbb{U} = [-1.75, 1.75]$~\cite{tarnFuzzyControlWing1993},
and therefore,
the optimization variable space is $U := \mathbb{U}^{N}$.
The horizon and time step are set as $N = 5$ and $\Delta t = 0.1$s, respectively.

\autoref{fig:wingrock_setpoint} shows the simulation result of learning-based NMPC,
and \autoref{fig:wingrock_setpoint_inset} is a zoomed-in view of \autoref{fig:wingrock_setpoint}.
As a benchmark,
the result of a linear MPC constructed with a linearized wing-rock model around the origin is also shown for comparison.
The linear MPC is constructed with a known dynamic model for linearization
as well as equilibria with required inputs for setpoint tracking,
while other approximates do not have the knowledge of the model and equilibria.
In the simulation,
the initial state and desired setpoint are set as
$\textbf{x}_{0} = [10 \text{ (deg)}, 45 \text{ (deg/s)}]^{\intercal}$
and $\textbf{x}_{d} = [-25 \text{ (deg)}, 0 \text{ (deg/s)}]^{\intercal}$, respectively.
As seen in \autoref{fig:wingrock_setpoint},
the FNN shows a poor control performance with chattering in the control input due to the local optimality of the FNN.
The PLSE network shows a good response with relatively large steady-state error without chattering input.
As seen in \autoref{fig:wingrock_setpoint_inset}
DLSE network shows smaller steady-state error than that of PLSE.
However,
the DLSE network also shows slight chattering in the control input due to local optimality.
On the other hand,
the EPLSE network shows the fastest convergence and the smallest steady-state error without chattering in the control input.
Linear MPC shows a considerably slow convergent response,
and therefore,
the response does not converge to the setpoint within $t_{f} = 15$s.
PLSE, DLSE, and EPLSE networks show much faster convergence behavior (converged nearly at $t=5$s) compared to that of the linear MPC because of the consideration of nonlinear dynamics via data,
whereas linear MPC cannot properly reflect the growing nonlinearity far from the origin, which makes it slow to converge.
All cases do not violate input constraints because both learning-based and model-based methods can incorporate the input constraints into the optimization problem.
\autoref{table:results_case2} summarizes the quantitative results of Case 2 in terms of simulation and evaluation over the test dataset.
Although the PLSE network shows the smallest mean objective value evaluated over the test dataset,
the PLSE, DLSE, and EPLSE networks show similar levels of the mean objective.
In contrast,
FNN and linear MPC show relatively high mean objectives compared to
the others.
In terms of the mean solve time over the test dataset,
the PLSE network shows the fastest solve time compared to other approximators.
The mean solve time of PLSE and EPLSE networks are very small and similar to that of linear MPC (approximately $2 \sim 6$ms).
On the other hand,
the mean solve time of FNN and DLSE network are about $15$ms to $80$ms,
which may not be applicable for real-time applications.
Overall, the proposed EPLSE network shows real-time optimization with fast tracking to the setpoint
in the learning-based nonlinear MPC demonstrated with the wing-rock model.
\begin{figure}
  \centering
  \includegraphics[width=0.85\linewidth]{./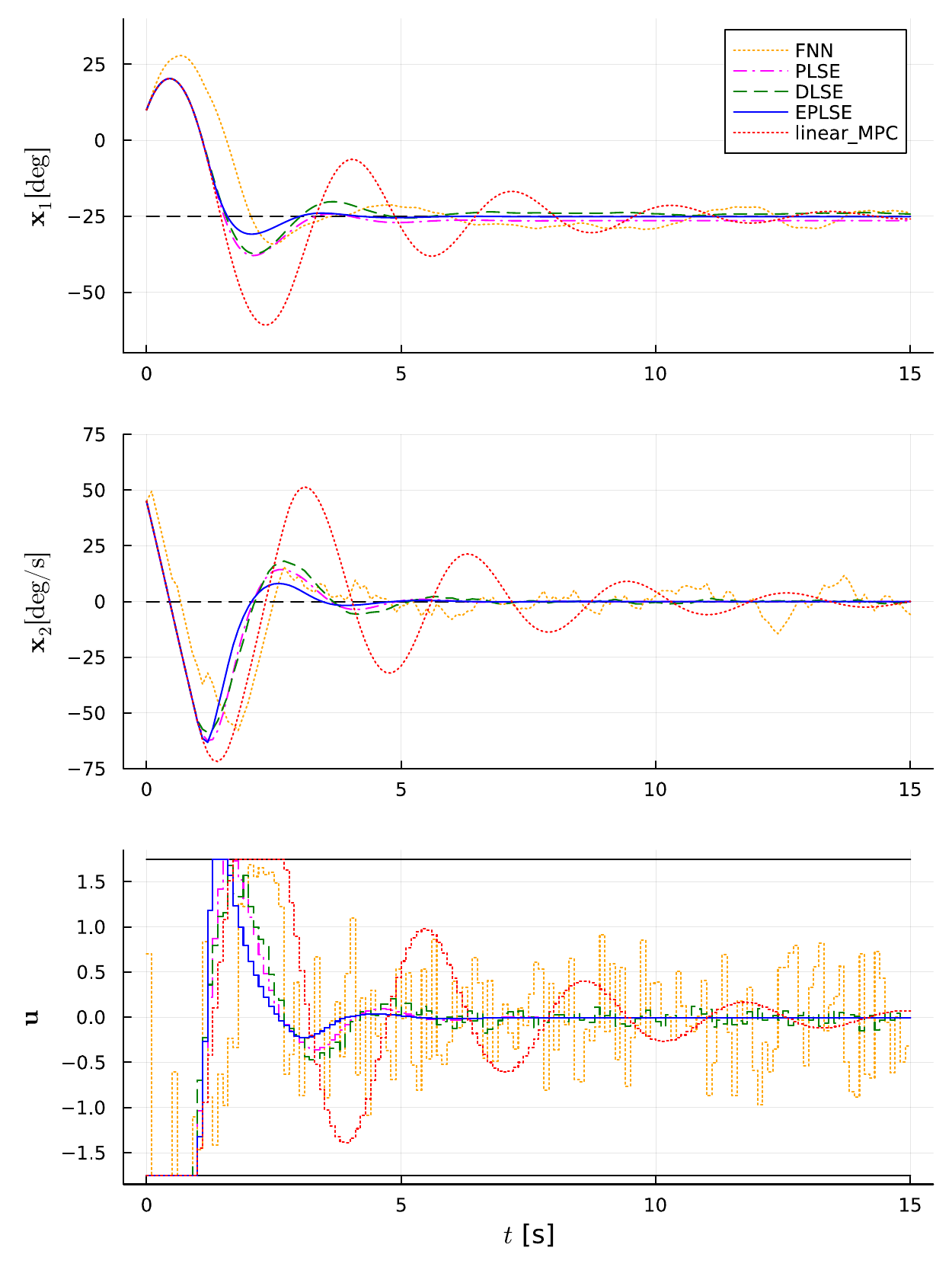}
  \caption{
    Response of learning-based NMPC (Case 2).
    Black dashed lines denote the given setpoint.
    Black solid lines denote the input limits.
  }
  \label{fig:wingrock_setpoint}
\end{figure}

\begin{figure}
  \centering
  \includegraphics[width=0.85\linewidth]{./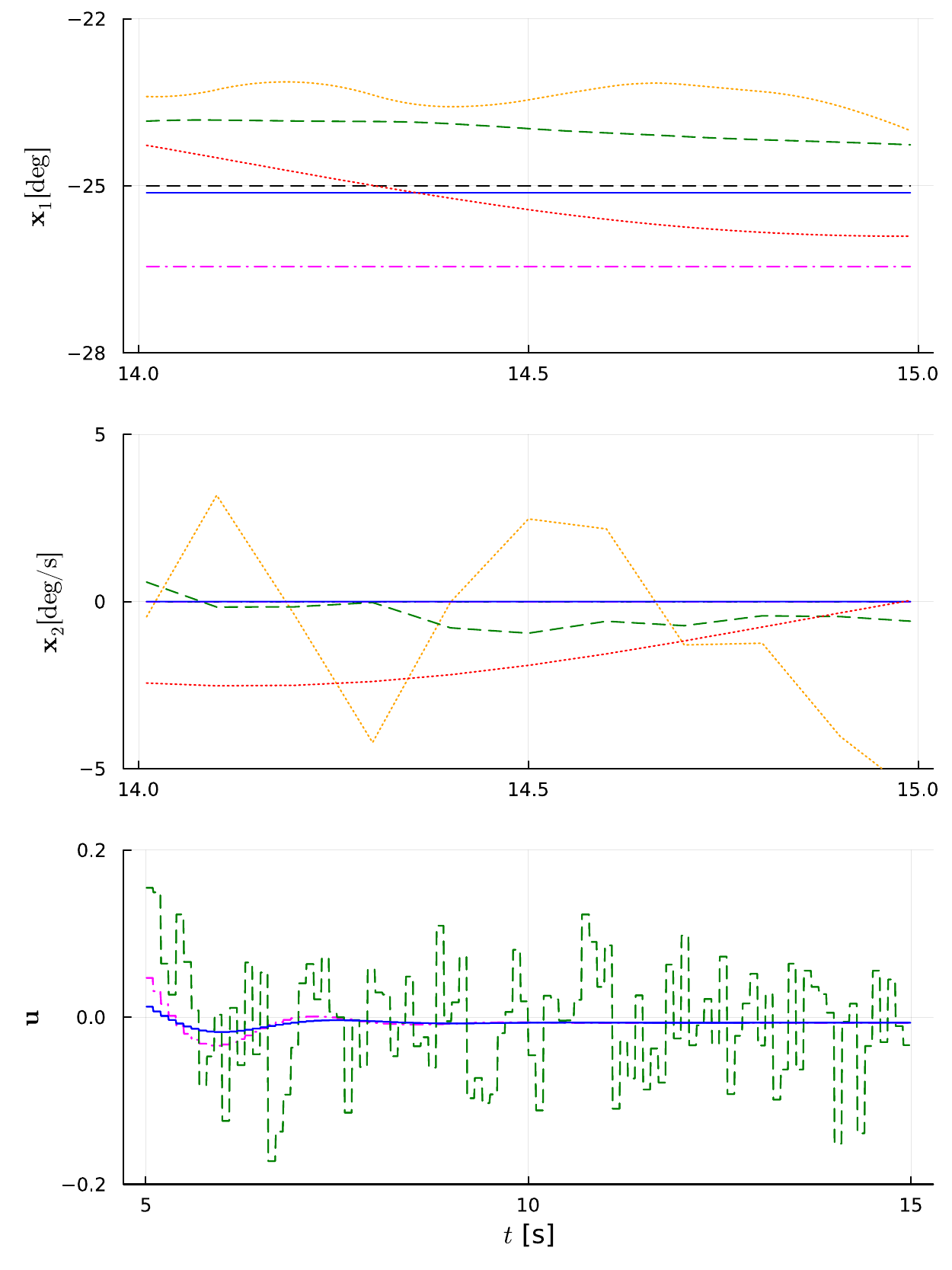}
  \caption{
    Zoomed-in view of \autoref{fig:wingrock_setpoint}.
    Each row may correspond to different time window.
  }
  \label{fig:wingrock_setpoint_inset}
\end{figure}

\section{Conclusion}
\label{sec:conclusion}
In this study,
parameterized convex minorant (PCM) method
was proposed as a new approach to objective function approximation in amortized optimization.
In the proposed method,
an objective function approximator is constructed with a PCM and a nonnegative gap function.
Employing shape-preserving universal approximators for parameterized convex continuous and continuous functions as the PCM and the gap function, respectively,
it was shown that the single-valued minimizer function of the PCM attains the global minimum of the objective function approximator.
Moreover,
the objective function approximator is a universal approximator for continuous functions.
These imply that a global minimizer of the objective function approximator can be found by minimizing the PCM,
which costs only a single convex optimization.
To realize the proposed method,
extended parameterized log-sum-exp (EPLSE) network was proposed
by utilizing a modified parameterized log-sum-exp network as the PCM of the objective function approximator.
Numerical simulation results for parameterized non-convex objective function approximation and learning-based nonlinear model predictive control support that the EPLSE network can approximate parameterized non-convex objective functions and the minimizer can be obtained quickly and reliably.

The proposed method is promising in amortized optimization,
however, a limitation is expected:
It may take a long time in the proposed method to train the objective function approximator than to train others
because not only the minimization but also the evaluation of the objective function approximator requires a single convex optimization.
To avoid the aforementioned issue,
future work includes new training methodology for the PCM method to reduce the training time.

\section*{Acknowledgments}
This work was supported by the National Research Foundation of Korea (NRF) grant funded by the Korean government (MSIT) (No. 2019R1A2C2083946).

\begin{appendices}
  \section{Proof of \autoref{thm:universal_approximation_theorem}}
\setcounter{equation}{0}  %
\renewcommand{\theequation}{\thesection.\arabic{equation}}  %
\label{sec:proof_of_uat}
Before the proof of the universal approximation theorem,
several lemmas are provided to describe basic characteristics of the parameterized greatest convex minorant (PGCM).
The following lemmas show that the PGCM of a continuous function is parameterized convex continuous.
\begin{lemma}
  \label{lemma:pconv_is_parameterized_convex}
  Given function $f: X \times U \to \mathbb{R}$, $\textup{pconv}f$ is parameterized convex.
\end{lemma}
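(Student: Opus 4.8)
The plan is to reduce the claim to the elementary fact that a pointwise supremum of convex functions is convex. Fix an arbitrary $x \in X$; by the definition of $\textup{pconv}$ it suffices to show that $\textup{conv}f_{x}$, the greatest convex minorant of the single-variable function $f_{x}(u) := f(x,u)$, is convex on $U$. So the whole argument takes place with $x$ held fixed.

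First I would check that the family $G_{x}$ of convex minorants of $f_{x}$ is nonempty, so that $\textup{conv}f_{x} = \sup_{g \in G_{x}} g$ is genuinely well-defined. Since $f$ is continuous and $U$ is compact, $f_{x}$ is bounded below, hence any constant function $u \mapsto c$ with $c \le \inf_{u \in U} f_{x}(u)$ is a convex minorant and $G_{x} \neq \emptyset$. Moreover every $g \in G_{x}$ satisfies $g(u) \le f_{x}(u)$, so $\textup{conv}f_{x}(u) \le f_{x}(u) < \infty$ for all $u \in U$; thus $\textup{conv}f_{x}$ is real-valued (the supremum is not attained at $+\infty$ anywhere).

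Next I would verify convexity directly. Take $u_{1}, u_{2} \in U$ and $\lambda \in [0,1]$; by convexity of $U$ the point $u_{\lambda} := \lambda u_{1} + (1-\lambda) u_{2}$ lies in $U$. For every $g \in G_{x}$, convexity of $g$ gives $g(u_{\lambda}) \le \lambda g(u_{1}) + (1-\lambda) g(u_{2}) \le \lambda\, \textup{conv}f_{x}(u_{1}) + (1-\lambda)\, \textup{conv}f_{x}(u_{2})$, where the last step uses $g \le \textup{conv}f_{x}$. Taking the supremum over $g \in G_{x}$ on the left-hand side yields $\textup{conv}f_{x}(u_{\lambda}) \le \lambda\, \textup{conv}f_{x}(u_{1}) + (1-\lambda)\, \textup{conv}f_{x}(u_{2})$, i.e.\ $\textup{conv}f_{x}$ is convex. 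Since $x$ was arbitrary, $\textup{pconv}f(x,\cdot) = \textup{conv}f_{x}$ is convex for every $x \in X$, which is precisely the statement that $\textup{pconv}f$ is parameterized convex.

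I do not anticipate a genuine obstacle: the argument is the standard ``supremum of convex functions is convex'' fact, and the only points deserving a word of care are the nonemptiness of $G_{x}$ and the finiteness of the supremum, both of which follow from compactness of $U$ together with continuity of $f$. If one prefers to sidestep even these remarks, the lemma is simply the pointwise-in-$x$ specialization of the corresponding unparameterized statement about greatest convex minorants.
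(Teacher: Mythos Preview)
Your argument is correct and is essentially the same as the paper's: fix $x$, use that each $g \in G_{x}$ is convex and dominated by $\textup{conv}f_{x}$, then take the supremum over $G_{x}$ to conclude convexity of $\textup{conv}f_{x}$. The only difference is that you add the well-definedness checks (nonemptiness of $G_{x}$ and finiteness of the supremum), which the paper omits; note that these checks invoke continuity of $f$ and compactness of $U$, which are standing assumptions of the paper but not hypotheses of the lemma as stated.
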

\begin{proof}
  Given $x \in X$,
  let $f_{x}(u) := f(x, u)$.
  Let $G_{x}$ be the set of convex minorants of $f_{x}$.
  Then, $\forall g \in G_{x}$,
  \begin{equation}
    \begin{split}
      g(\lambda &u_{1} + (1-\lambda) u_{2})
      \\
                &\leq \lambda g(u_{1}) + (1-\lambda) g(u_{2})
                \\
                &\leq \lambda \textrm{conv}f_{x}(u_{1}) + (1-\lambda) \textrm{conv}f_{x}(u_{2}).
    \end{split}
  \end{equation}
  Taking the supremum yields
  \begin{equation}
    \begin{split}
      \textrm{conv}f_{x}(\lambda &u_{1} + (1-\lambda) u_{2})
      \\
                                 &\leq \lambda \textrm{conv}f_{x}(u_{1}) + (1-\lambda) \textrm{conv}f_{x}(u_{2}),
    \end{split}
  \end{equation}
  and replacing $\textrm{conv}f_{x}(\cdot)$ with $\textrm{pconv}f(x, \cdot)$ implies the parameterized convexity of $\textrm{pconv}f$.
\end{proof}

\begin{lemma}
  \label{lemma:pconv_is_continuous}
  Given continuous function $f: X \times U \to \mathbb{R}$, $\textup{pconv}f$ is continuous.
\end{lemma}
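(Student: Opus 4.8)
The plan is to show that $\textup{pconv}f$ is both lower and upper semicontinuous jointly in $(x,u)$, and hence continuous, exploiting the compactness of $X$ and $U$ together with the convexity in $u$. The starting point is a useful representation of the greatest convex minorant on a compact convex set $U \subset \mathbb{R}^m$: for each fixed $x$, $\textup{conv}f_x$ is the pointwise supremum of all affine functions dominated by $f_x$, equivalently, by Carath\'eodory's theorem applied to the convex hull of the epigraph, for each $u \in U$ one has
\begin{equation}
  \textup{pconv}f(x,u) = \min\Bigl\{ \textstyle\sum_{j=1}^{m+1}\lambda_j f(x,u_j) : \lambda_j \ge 0,\ \textstyle\sum_j \lambda_j = 1,\ \textstyle\sum_j \lambda_j u_j = u,\ u_j \in U \Bigr\}.
\end{equation}
The first step is to justify this formula and to note that the minimum is attained because $f$ is continuous and the feasible set (a closed bounded subset of $(\Delta^{m}) \times U^{m+1}$, where $\Delta^m$ is the simplex) is compact and nonempty for every $(x,u)$ since $U$ is convex.

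Next I would prove upper semicontinuity. Fix $(x_0,u_0)$ and take an optimal tuple $(\lambda^0_j, u^0_j)_{j}$ achieving the minimum at $(x_0,u_0)$. Because $u_0 \in \textup{int}$ issues can be avoided by a small perturbation argument: for $(x,u)$ near $(x_0,u_0)$, slightly adjust the support points $u^0_j$ to points $u_j$ with $\sum_j \lambda^0_j u_j = u$ (possible for $u$ close to $u_0$ by a standard openness-of-affine-maps argument on the simplex, using that the $u^0_j$ affinely span a neighborhood of $u_0$ in the relevant face; if they do not, add dummy points), keeping the same weights $\lambda^0_j$. Then continuity of $f$ gives $\textup{pconv}f(x,u) \le \sum_j \lambda^0_j f(x,u_j) \to \sum_j \lambda^0_j f(x_0,u^0_j) = \textup{pconv}f(x_0,u_0)$, which is upper semicontinuity.

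For lower semicontinuity I would argue by contradiction: suppose $(x_k,u_k) \to (x_0,u_0)$ with $\textup{pconv}f(x_k,u_k) \to L < \textup{pconv}f(x_0,u_0)$. Pick optimal tuples $(\lambda^k_j, u^k_j)_j$ at $(x_k,u_k)$; by compactness of $\Delta^m \times U^{m+1}$ pass to a subsequence converging to some $(\lambda^\infty_j, u^\infty_j)_j$, which is feasible for the problem at $(x_0,u_0)$ by continuity of the constraints. Then $\textup{pconv}f(x_0,u_0) \le \sum_j \lambda^\infty_j f(x_0,u^\infty_j) = \lim_k \sum_j \lambda^k_j f(x_k,u^k_j) = L$, a contradiction. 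Combining the two semicontinuity results yields continuity of $\textup{pconv}f$ on $X \times U$.

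The main obstacle I anticipate is the perturbation step in the upper-semicontinuity argument: handling the case where the optimal support points $u^0_j$ fail to affinely span a full neighborhood of $u_0$ (e.g.\ $u_0$ lies on the boundary of $U$, or the optimal representation is ``degenerate''). The clean fix is to observe that one may always pad the representation with extra points and zero weights so that the map $(\text{weights},\text{points}) \mapsto \text{barycenter}$ is a submersion near the chosen tuple, or alternatively to prove upper semicontinuity of $\textup{pconv}f$ via the dual description $\textup{pconv}f(x,u) = \sup\{\langle a,u\rangle + b : \langle a,u'\rangle + b \le f(x,u')\ \forall u' \in U\}$ together with an equicontinuity/Lipschitz bound on the admissible affine minorants coming from the (uniform) continuity of $f$ on the compact set $X \times U$ — this second route may actually be the smoother one to write out and I would likely adopt it. Everything else is routine compactness bookkeeping.
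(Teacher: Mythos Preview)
Your proposal is sound in outline, and the obstacle you flag in the perturbation step is real but fixable (the padding idea works once you give the auxiliary point a \emph{small positive} weight rather than weight zero). The paper, however, takes a different and shorter route that bypasses the Carath\'eodory representation entirely. It works directly from the definition of the GCM as a supremum of convex minorants: by uniform continuity of $f$ on the compact set $X\times U$, for $x$ near $x_0$ one has $f(x,u)\ge f(x_0,u)-\epsilon\ge\textup{conv}f_{x_0}(u)-\epsilon$ for every $u\in U$, so $\textup{conv}f_{x_0}-\epsilon$ is itself a convex minorant of $f(x,\cdot)$, whence $\textup{conv}f_x(u)\ge\textup{conv}f_{x_0}(u)-\epsilon$; by symmetry $\lvert\textup{conv}f_x(u)-\textup{conv}f_{x_0}(u)\rvert\le\epsilon$ uniformly in $u$. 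Continuity in $u$ at fixed $x_0$ is then read off from convexity (finite convex functions are continuous), and the two pieces are combined via the triangle inequality. This completely sidesteps the boundary/degeneracy issue you worried about, at the price of not exhibiting the $(m{+}1)$-point structure. It is worth noting that if you carry your proposed ``dual description plus equicontinuity'' alternative through---observing that every affine minorant of $f(x,\cdot)$ becomes, after subtracting $\epsilon$, an affine minorant of $f(x_0,\cdot)$---you arrive essentially at the paper's comparison-of-minorants argument.
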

\begin{proof}
  Given $x \in X$,
  let $f_{x}(u) := f(x, u)$.
  Then, $\textrm{pconv}f(x, \cdot) = \textrm{conv}f_{x}(\cdot)$ is convex and finite, and therefore $\textrm{conv}f_{x}(\cdot)$ is continuous~\cite[Corollary 10.1.1]{rockafellarConvexAnalysis1970}.
  That is,
  given $(x_{0}, u_{0}) \in X \times U$,
  $\forall \epsilon_{1} > 0, \exists \delta_{1} > 0$ such that $\lvert \textrm{conv}f_{x_{0}}(u) - \textrm{conv}f_{x_{0}}(u_{0}) \rvert < \epsilon_{1}$ for all $u \in U$ where $\lVert u - u_{0} \rVert < \delta_{1}$.
  Additionally,
  since $f$ is continuous, $\forall \epsilon_{2} > 0, \exists \delta_{2} > 0$ such that $\lvert f_{x}(u) - f_{x_{0}}(u) \rvert < \epsilon_{2}$ for all $x \in X$ where $\lVert x - x_{0} \rVert < \delta_{2}$.
  Then,
  \begin{equation}
    \begin{split}
      \lvert f_{x}(u) - f_{x_{0}}(u) \rvert < \epsilon_{2}
      \Rightarrow
      f_{x}(u)
    &> f_{x_{0}}(u) - \epsilon_{2}
    \\
    &\geq \textrm{conv}f_{x_{0}}(u) - \epsilon_{2}
    \\
    &=: \textrm{conv}f_{x_{0}}^{\epsilon_{2}}(u),
    \end{split}
  \end{equation}
  due to the continuity of $f$.
  Since $\textrm{conv}f_{x_{0}}^{\epsilon_{2}}$ is convex
  and $\textrm{conv}f_{x_{0}}^{\epsilon_{2}} (u) \leq f_{x} (u), \forall u \in U$,
  $\textrm{conv}f_{x}(u) \geq \textrm{conv}f_{x_{0}}^{\epsilon_{2}}(u) = \textrm{conv}f_{x_{0}}(u) - \epsilon_{2}$.
  By symmetry, $\textrm{conv}f_{x_{0}}(u) \geq \textrm{conv}f_{x}(u) - \epsilon_{2} \Rightarrow \lvert \textrm{conv}f_{x}(u) - \textrm{conv}f_{x_{0}}(u) \rvert \leq \epsilon_{2} $.
  Therefore, $\forall \epsilon > 0$, let $\epsilon_{1} = \epsilon_{2} = \epsilon / 2$ and $\delta = \min\{ \delta_{1}, \delta_{2}\} > 0$,
  \begin{equation}
    \begin{split}
      \lvert \textrm{pconv}&f(x, u) - \textrm{pconv}f(x_{0}, u_{0}) \rvert
      \\
      =&\lvert \textrm{conv}f_{x}(u) - \textrm{conv}f_{x_{0}}(u_{0}) \rvert
      \\
      \leq& \lvert \textrm{conv}f_{x}(u) - \textrm{conv}f_{x_{0}}(u) \rvert
      \\
          &+ \lvert \textrm{conv}f_{x_{0}}(u) - \textrm{conv}f_{x_{0}}(u_{0}) \rvert
          \\
      <& \epsilon / 2 + \epsilon / 2
      = \epsilon,
    \end{split}
  \end{equation}
  for all $(x, u) \in X \times U$ where $\lVert (x, u) - (x_{0}, u_{0}) \rVert < \delta$,
  which concludes the proof.
\end{proof}

The following lemmas describe the characteristics of the minimizers and minimum values of the PGCM.
\begin{lemma}
  \label{lemma:pgcm_minimizer_minimum}
  Given continuous function $f: X \times U \to \mathbb{R}$,
  for any $x \in X$,
  the following holds true:
  \begin{itemize}
    \item $\min_{u \in U} f(x, u) = \min_{u \in U} \textup{pconv}f(x, u)$,
    \item $\argmin_{u \in U}f(x, u) \subset \argmin_{u \in U}\textup{pconv}f(x, u) $.
  \end{itemize}
\end{lemma}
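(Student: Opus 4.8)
The plan is to establish the two claims from properties of the greatest convex minorant $\operatorname{conv}f_x$, fixing $x \in X$ throughout (so $f_x(u) := f(x,u)$ and $\operatorname{pconv}f(x,\cdot) = \operatorname{conv}f_x(\cdot)$). For the first bullet, I would argue by two inequalities. Since $\operatorname{conv}f_x$ is a minorant of $f_x$, we have $\operatorname{conv}f_x(u) \le f_x(u)$ for all $u \in U$, hence $\min_{u\in U}\operatorname{conv}f_x(u) \le \min_{u\in U}f_x(u)$; note both minima are attained because $U$ is compact, $f_x$ is continuous (given), and $\operatorname{conv}f_x$ is continuous by \lref{lemma:pconv_is_continuous}. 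For the reverse inequality, let $m := \min_{u\in U}f_x(u)$. The constant function $g \equiv m$ is convex and satisfies $g(u) = m \le f_x(u)$ for all $u \in U$, so $g$ is a convex minorant of $f_x$; by the definition of the GCM as the supremum over all convex minorants, $\operatorname{conv}f_x(u) \ge m$ for all $u$, and therefore $\min_{u\in U}\operatorname{conv}f_x(u) \ge m = \min_{u\in U}f_x(u)$. Combining the two gives equality.

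For the second bullet, take any $u^\star \in \argmin_{u\in U}f(x,u)$, so $f_x(u^\star) = m$. On one hand, $\operatorname{conv}f_x(u^\star) \le f_x(u^\star) = m$ since $\operatorname{conv}f_x$ minorizes $f_x$. On the other hand, we just showed $\operatorname{conv}f_x(u) \ge m$ for every $u \in U$, so in particular $\operatorname{conv}f_x(u^\star) \ge m$. Hence $\operatorname{conv}f_x(u^\star) = m = \min_{u\in U}\operatorname{conv}f_x(u)$ by the first bullet, i.e. $u^\star \in \argmin_{u\in U}\operatorname{pconv}f(x,u)$. Since $u^\star$ was arbitrary, the claimed inclusion follows. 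Finally, re-expressing $\operatorname{conv}f_x$ as $\operatorname{pconv}f(x,\cdot)$ throughout yields the statement as written.

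There is no serious obstacle here: the only subtlety is making sure the minima are actually attained (so that "$\min$" is legitimate rather than "$\inf$"), which is covered by compactness of $U$ together with continuity of $f$ and of $\operatorname{pconv}f$ (the latter supplied by \lref{lemma:pconv_is_continuous}), and being careful that the inclusion in the second bullet can be strict — $\operatorname{pconv}f(x,\cdot)$ may have extra minimizers coming from the convexification, which is exactly why the statement is an inclusion and not an equality. I would mention the constant-minorant trick explicitly since it is the one nonobvious step driving the lower bound $\operatorname{pconv}f(x,u) \ge \min_u f(x,u)$.
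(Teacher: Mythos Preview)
Your proof is correct and follows essentially the same approach as the paper: the paper's auxiliary function $f_a(x,u):=\min_{u'\in U}f(x,u')$ is exactly your constant minorant $g\equiv m$ (phrased at the parameterized level rather than for fixed $x$), and the paper uses it to derive the same lower bound $\operatorname{pconv}f(x,u)\ge \min_{u}f(x,u)$, just via a one-line contradiction instead of your direct supremum argument. The only cosmetic difference is that the paper invokes the extreme value theorem and Berge's maximum theorem for attainment, whereas you appeal to compactness of $U$ and \lref{lemma:pconv_is_continuous}; both are fine.
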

\begin{proof}
  Let us define an auxiliary function $f_{a} : X \times U \to \mathbb{R}$ such that $f_{a}(x, u) := \min_{u' \in U} f(x, u')$.
  That is, $f_{a}(x, \cdot)$ is constant for any $x \in X$.
  Let us note that $f_{a}$ is well-defined by extreme value theorem~\cite[Theorem 4.16]{rudinPrinciplesMathematicalAnalysis1976}
  and also that $f_{a}$ is continuous by Berge's maximum theorem~\cite[Chapter E.3]{okRealAnalysisEconomic2007}.
  It is straightforward to show that $f_{a}$ is parameterized convex and $f_{a}(x, u) \leq f(x, u)$, $\forall (x, u) \in X \times U$,
  i.e., $f_{a}$ is a parameterized convex minorant of the given function $f$.
  By definition,
  $\text{pconv}f(x, u) \leq f(x, u)$, $\forall (x, u) \in X \times U$.
  This implies that
  $\text{pconv}f(x, u^{\star}) \leq f(x, u^{\star}) = \min_{u \in U} f(x, u)$,
  $\forall u^{\star} \in \argmin_{u \in U} f(x, u)$.

  If there exists $u_{1} \in U$
  such that $\text{pconv}f(x, u_{1}) < \min_{u \in U}f(x, u)$,
  then this contradicts the definition of PGCM because $\text{pconv}f(x, u_{1}) < \min_{u \in U}f(x, u) = f_{a}(x, u_{1})$
  for a parameterized convex minorant $f_{a}$.
  Therefore, $\text{pconv}f(x, u) \geq \min_{u \in U}f(x, u)$, $\forall (x, u) \in X \times U$.
  Hence,
  $\text{pconv}f(x, u^{\star}) = \min_{u \in U}f (x, u)$, $\forall x \in X$, $u^{\star} \in \argmin_{u \in U} f(x, u)$.
  This implies that for given $x \in X$, any $u^{\star} \in \argmin_{u \in U}f(x, u) $ attains the minimum of $\text{pconv}f$
  with the minimum value of $\min_{u \in U} \text{pconv}f(x, u) = \min_{u \in U} f(x, u) $,
  which concludes the proof.
\end{proof}

Then, the proof of \autoref{thm:universal_approximation_theorem} can be shown as follows.
\begin{proof}
  Fix $\epsilon > 0$.
  By \lref{lemma:pconv_is_parameterized_convex} and \lref{lemma:pconv_is_continuous},
  $\text{pconv}f$ is parameterized convex continuous.
  From Berge's maximum theorem,
  $\text{pconv}f^{\star}: X \to \mathbb{R}$
  is continuous where $\text{pconv}f^{\star}(x) := \min_{u \in U} \text{pconv}f(x, u)$, $\forall x \in X$~\cite[Chapter E.3]{okRealAnalysisEconomic2007}.
  Functions $f$, $\text{pconv}f$, and $\text{pconv}f^{\star}$ are continuous
  on compact sets $X \times U$, $X \times U$, and $X$, respectively~\cite{munkresTopology2014}.
  A fortiori,
  $f$, $\text{pconv}f$, and $\text{pconv}f^{\star}$ are uniformly continuous.
  Therefore, given $(x, u) \in X \times U$,
  $\forall \epsilon_{1} > 0$, $\exists \delta > 0$ such that
  \begin{equation}
    \label{eq:continuity}
    \begin{split}
      \lvert f(x_{1}, u_{1}) - f(x, u) \rvert &< \epsilon_{1},
      \\
      \lvert \text{pconv}f(x_{1}, u_{1}) - \text{pconv}f(x, u) \rvert &< \epsilon_{1},
      \\
      \lvert \text{pconv}f^{\star}(x_{1}) - \text{pconv}f^{\star}(x) \rvert &< \epsilon_{1},
    \end{split}
  \end{equation}
  for all $(x_{1}, u_{1}) \in X \times U$ where $\lVert (x_{1}, u_{1}) - (x, u) \rVert < \delta$.

  Let us define
  \begin{equation}
    \label{eq:overline_pconv}
    \begin{split}
      \overline{\text{pconv}f}(x, u)
      :=& \max (\text{pconv}f(x, u), \text{pconv}f(x, h_{\delta}(x)))
      \\
      &+ \gamma \lVert u - h_{\delta}(x) \rVert,
    \end{split}
  \end{equation}
  where $\gamma > 0$ is a positive constant,
  and $h_{\delta}$ is a continuous approximate selection in \aref{assumption:regularity_condition}.
  It is straightforward to show that $\overline{\text{pconv}f}$ is parameterized convex continuous.
  The given continuous function $f$ can be factorized as follows,
  \begin{equation}
    \label{eq:factorization}
    \begin{split}
      f(x, u)
    &= \overline{\text{pconv}f}(x, u)
    + \left(f(x, u) - \overline{\text{pconv}f}(x, u)\right)
    \\
    &=: \overline{\text{pconv}f}(x, u) + \overline{\Delta} f(x, u).
    \end{split}
  \end{equation}
  The proof shows that the factorized function is approximated by $f^{\text{PLSE}}$ and $f^{\text{NN}}$ through several steps.

  \textbf{(Step 1)}
  From \aref{assumption:regularity_condition}, \lref{lemma:pgcm_minimizer_minimum}, and \eqref{eq:continuity},
  given $x \in X$,
  there exists $x_{1} \in X$
  and $u_{1}^{\star} \in \argmin_{u \in U} f(x_{1}, u) \subset \argmin_{u \in U} \text{pconv}f(x_{1}, u) $
  such that $\lVert (x_{1}, u_{1}^{\star}) - (x, h_{\delta}(x)) \rVert < \delta$.
  Then,
  \begin{equation}
    \label{eq:step1_aux}
    \begin{split}
      0 \leq& \text{pconv}f(x, h_{\delta}(x)) - \text{pconv}f^{\star}(x)
      \\
        =& \lvert \text{pconv}f(x, h_{\delta}(x)) - \text{pconv}f^{\star}(x) \rvert
        \\
        \leq& \lvert \text{pconv}f(x, h_{\delta}(x)) - \text{pconv}f(x_{1}, u_{1}^{\star}) \rvert
        \\
        &+ \lvert \text{pconv}f(x_{1}, u_{1}^{\star}) - \text{pconv}f^{\star}(x) \rvert
        \\
        <& \epsilon_{1}
        + \lvert \text{pconv}f(x_{1}, u_{1}^{\star}) - \text{pconv}f^{\star}(x) \rvert
        \\
        =& \epsilon_{1}
        + \lvert \text{pconv}f^{\star}(x_{1}) - \text{pconv}f^{\star}(x) \rvert
        \\
        <& \epsilon_{1} + \epsilon_{1} = 2 \epsilon_{1}.
    \end{split}
  \end{equation}
  Therefore, using \eqref{eq:overline_pconv} and \eqref{eq:step1_aux},
  for any $(x, u) \in X \times U$,
  \begin{equation}
    \label{eq:aux_overline_approx}
    \begin{split}
      \lvert \overline{\text{pconv}f}&(x, u) - \text{pconv}f(x, u) \rvert
      \\
      =& \max(0, \text{pconv}f(x, h_{\delta}(x)) - \text{pconv}f(x, u)) 
      \\
       &+ \gamma \lVert u - h_{\delta}(x) \rVert
      \\
      \leq& \left(\text{pconv}f(x, h_{\delta}(x)) - \text{pconv}f^{\star}(x)\right)
      \\
          &+ \gamma \text{diam}(U)
          \\
      <& 2 \epsilon_{1} + \gamma \text{diam}(U),
    \end{split}
  \end{equation}
  where $\text{diam}(U)$ is the diameter of $U$ such that $\text{diam}(U) := \sup_{u, u' \in U} \lVert u - u' \rVert$.
  $U \subset \mathbb{R}^{m}$ is assumed to be compact,
  implying $U$ is bounded~\cite[Theorem 27.3]{munkresTopology2014}.
  Therefore, $\text{diam}(U) < \infty$.

  \textbf{(Step 2)}
  Since $\overline{\text{pconv}f}$ is parameterized convex continuous,
  for all $\epsilon_{2} > 0$,
  there exists a PCM $f^{\text{PCM}}$
  such that
  \begin{equation}
    \label{eq:plse_approx}
    \lVert f^{\text{PCM}} - \overline{\text{pconv}f} \rVert_{\infty} < \epsilon_{2},
  \end{equation}
  as the PCM $f^{\text{PCM}}$ is a shape-preserving universal approximator for parameterized convex continuous functions.
  By \eqref{eq:plse_approx} and the definition of $\overline{\text{pconv}f}$ in \eqref{eq:overline_pconv},
  \begin{equation}
    \text{pconv}f(x, h_{\delta}(x))
    = \overline{\text{pconv}f}(x, h_{\delta}(x))
    \geq f^{\text{PCM}}(x, h_{\delta}(x)) - \epsilon_{2},
  \end{equation}
  and therefore, from \eqref{eq:plse_approx},
  for any $(x, u) \in X \times U$,
  \begin{equation}
    \begin{split}
    f^{\text{PCM}}&(x, u)
    \geq \overline{\text{pconv}f}(x, u) - \epsilon_{2}
    \\
    =& \max( \text{pconv}f(x, u), \text{pconv}f(x, h_{\delta}(x)))
    \\
     &+ \gamma \lVert u - h_{\delta}(x) \rVert - \epsilon_{2}
    \\
    \geq& \text{pconv}f(x, h_{\delta}(x)) + \gamma \lVert u - h_{\delta}(x) \rVert - \epsilon_{2}
    \\
    \geq& \left(f^{\text{PCM}}(x, h_{\delta}(x)) - \epsilon_{2}\right) + \gamma \lVert u - h_{\delta}(x) \rVert - \epsilon_{2}
    \\
    =& f^{\text{PCM}}(x, h_{\delta}(x)) + \gamma \lVert u - h_{\delta}(x) \rVert - 2\epsilon_{2},
    \end{split}
  \end{equation}
  which implies for all $x \in X$ and $\hat{u}^{\star} \in \argmin_{u \in U}f^{\text{PCM}}(x, u) $ that
  \begin{equation}
    \label{eq:step2}
    \begin{split}
      0
      &\geq f^{\text{PCM}}(x, \hat{u}^{\star}) - f^{\text{PCM}}(x, h_{\delta}(x))
      \\
      &\geq \gamma \lVert \hat{u}^{\star} - h_{\delta}(x) \rVert - 2 \epsilon_{2}
      \\
      &\Rightarrow \lVert \hat{u}^{\star} - h_{\delta}(x) \rVert \leq 2 \epsilon_{2} / \gamma.
    \end{split}
  \end{equation}

  \textbf{(Step 3)}
  From the definition of $\overline{\Delta}f$ in \eqref{eq:factorization},
  $\overline{\Delta}f$ is continuous on $X \times U$ compact.
  A fortiori, $\overline{\Delta}f$ is uniformly continuous.
  Hence, given $(x, u) \in X \times U$,
  $\forall \epsilon_{3} > 0$, $\exists \delta_{3} > 0$ such that
  \begin{equation}
    \label{eq:continuity_of_overline_Delta_f}
    \lvert \overline{\Delta}f (x_{3}, u_{3}) - \overline{\Delta}f (x, u) \rvert < \epsilon_{3},
  \end{equation}
  for all $ (x_{3}, u_{3}) \in X \times U $ where $\lVert (x_{3}, u_{3}) - (x, u) \rVert < \delta_{3}$.
  From \eqref{eq:step2} and \eqref{eq:continuity_of_overline_Delta_f},
  setting $\epsilon_{2} = \min (\delta_{3} \gamma/2, \epsilon / 12)$ implies
  \begin{equation}
    \label{eq:step3}
    \lvert \overline{\Delta}f(x, \hat{u}^{\star}) - \overline{\Delta}f(x, h_{\delta}(x)) \rvert < \epsilon_{3},
  \end{equation}
  for all $x \in X$ and $\hat{u}^{\star} \in \argmin_{u \in U}f^{\text{PCM}}(x, u)$.

  \textbf{(Step 4)}
  Since $\overline{\Delta}f$ is continuous,
  for all $\epsilon_{4} > 0$,
  there exists a shape-preserving universal approximator for continuous functions, $f^{\text{NN}}$, such that
  \begin{equation}
    \label{eq:nn_approx}
    \lVert f^{\text{NN}} - \overline{\Delta}f \rVert_{\infty} < \epsilon_{4}.
  \end{equation}

  From \aref{assumption:regularity_condition}, \eqref{eq:continuity}, \eqref{eq:overline_pconv}, and \lref{lemma:pgcm_minimizer_minimum},
  for all $x \in X$,
  there exists $x_{1} \in X$ and $u^{\star}_{1} \in \argmin_{u \in U} f(x_{1}, u)$
  where $\lVert (x_{1}, u_{1}^{\star}) - (x, h_{\delta}(x)) \rVert < \delta$ such that
  \begin{equation}
    \label{eq:aux_overline_delta_f}
    \begin{split}
      \lvert \overline{\Delta}  f&(x, h_{\delta}(x)) \rvert
      = \lvert f(x, h_{\delta}(x)) - \overline{\text{pconv}f}(x, h_{\delta}(x)) \rvert
      \\
      \leq& \lvert f(x, h_{\delta}(x)) - f(x_{1}, u^{\star}_{1}) \rvert
      \\
      &+ \lvert f(x_{1}, u^{\star}_{1}) - \overline{\text{pconv}f}(x, h_{\delta}(x)) \rvert
      \\
      <& \epsilon_{1} + \lvert f(x_{1}, u^{\star}_{1}) - \overline{\text{pconv}f}(x, h_{\delta}(x)) \rvert
      \\
      =& \epsilon_{1} + \lvert f(x_{1}, u^{\star}_{1}) - \text{pconv}f(x, h_{\delta}(x)) \rvert
      \\
      =& \epsilon_{1} + \lvert \text{pconv}f(x_{1}, u^{\star}_{1}) - \text{pconv}f(x, h_{\delta}(x)) \rvert
      \\
      <& \epsilon_{1} + \epsilon_{1} = 2 \epsilon_{1}.
    \end{split}
  \end{equation}

  Therefore,
  using \eqref{eq:step3}, \eqref{eq:nn_approx} and \eqref{eq:aux_overline_delta_f}
  implies that for all $(x, u) \in X \times U$ and $\hat{u}^{\star} \in \argmin_{u \in U} f^{\text{PCM}}(x, u)$,
  \begin{equation}
    \label{eq:aux_before_relu}
    \begin{split}
    \lvert f^{\text{NN}}&(x, u) - f^{\text{NN}}  (x, \hat{u}^{\star}) - \overline{\Delta}f(x, u) \rvert
    \\
    \leq& \lvert f^{\text{NN}}(x, u) - \overline{\Delta}f(x, u) \rvert
    + \lvert f^{\text{NN}}(x, \hat{u}^{\star}) \rvert
    \\
    <& \epsilon_{4} + \lvert f^{\text{NN}}(x, \hat{u}^{\star}) \rvert
    \\
    \leq& \epsilon_{4}
    + \lvert f^{\text{NN}}(x, \hat{u}^{\star}) - \overline{\Delta}f(x, \hat{u}^{\star}) \rvert
    + \lvert \overline{\Delta}f(x, \hat{u}^{\star}) \rvert
    \\
    <& 2\epsilon_{4} + \lvert \overline{\Delta}f(x, \hat{u}^{\star}) \rvert
    \\
    \leq& 2\epsilon_{4}
    + \lvert \overline{\Delta}f(x, \hat{u}^{\star}) - \overline{\Delta}f(x, h_{\delta}(x))  \rvert
    \\
    &+ \lvert \overline{\Delta}f(x, h_{\delta}(x)) \rvert
    \\
    <& \epsilon_{3} + 2\epsilon_{4} + \lvert \overline{\Delta}f(x, h_{\delta}(x)) \rvert
    \\
    <& 2\epsilon_{1} + \epsilon_{3} + 2\epsilon_{4}.
    \end{split}
  \end{equation}
  
  \textbf{(Step 5)}
  Now, let us show that
  \begin{equation}
    \label{eq:aux_with_relu}
    \begin{split}
      A
    &:= \lvert \max(0, f^{\text{NN}}(x, u) - f^{\text{NN}}(x, \hat{u}^{\star}(x))) - \overline{\Delta}f(x, u) \rvert
    \\
    &< 6 \epsilon_{1} + \epsilon_{3} + 2 \epsilon_{4} + 2 \gamma \text{diam}(U),
    \end{split}
  \end{equation}
  where $\hat{u}^{\star}:X \to U$ is a single-valued minimizer function of the PCM $f^{\text{PCM}}$
  defined in \eqref{eq:pcm_method} such that $\hat{u}^{\star}(x) \in \argmin_{u \in U} f^{\text{PCM}}(x, u)$ (with abuse of notation).

  If $f^{\text{NN}}(x, u) - f^{\text{NN}}(x, \hat{u}^{\star}(x)) \geq 0$,
  then from \eqref{eq:aux_before_relu},
  \begin{equation}
    \begin{split}
      A
      &= \lvert f^{\text{NN}}(x, u) - f^{\text{NN}}(x, \hat{u}^{\star}(x)) - \overline{\Delta}f(x, u) \rvert
      \\
      &< 2\epsilon_{1} + \epsilon_{3} + 2\epsilon_{4}
      \\
      &< 6 \epsilon_{1} + \epsilon_{3} + 2 \epsilon_{4} + 2 \gamma \text{diam}(U).
    \end{split}
  \end{equation}

  If $f^{\text{NN}}(x, u) - f^{\text{NN}}(x, \hat{u}^{\star}(x)) \leq 0$,
  it can be shown from \eqref{eq:aux_overline_approx} that
  \begin{equation}
    \begin{split}
      \lVert \overline{\Delta}f - \Delta f \rVert_{\infty}
      &= \lVert \overline{\text{pconv}f} - \text{pconv}f \rVert_{\infty}
      \\
      &< 2 \epsilon_{1} + \gamma \text{diam}(U) =: \epsilon_{1, \gamma},
    \end{split}
  \end{equation}
  where $\Delta f(x, u) := f(x, u) - \text{pconv}f(x, u) \geq 0$, $\forall (x, u) \in X \times U$,
  which implies
  $\overline{\Delta}f(x, u) + \epsilon_{1, \gamma}
  \geq \Delta f(x, u)
  \geq 0,$
  $\forall (x, u) \in X \times U$.
  Therefore,
  using \eqref{eq:aux_before_relu} implies
  \begin{equation}
    \label{eq:step5}
    \begin{split}
      A
      &= \lvert \max(0, f^{\text{NN}}(x, u) - f^{\text{NN}}(x, \hat{u}^{\star}(x))) - \overline{\Delta}f(x, u) \rvert
      \\
      &= \lvert \overline{\Delta}f(x, u) \rvert
      = \lvert \overline{\Delta}f(x, u) + \epsilon_{1, \gamma} - \epsilon_{1, \gamma} \rvert
      \\
      &\leq \left(\overline{\Delta}f(x, u) + \epsilon_{1, \gamma}\right) + \epsilon_{1, \gamma}
      \\
      &= \overline{\Delta}f(x, u)
      + 2\epsilon_{1, \gamma}
      \\
      &\leq \overline{\Delta}f(x, u) - (f^{\text{NN}}(x, u) - f^{\text{NN}}(x, \hat{u}^{\star}(x)))
      + 2 \epsilon_{1, \gamma}
      \\
      &= -\left(
        f^{\text{NN}}(x, u)
        - f^{\text{NN}}(x, \hat{u}^{\star}(x)) - \overline{\Delta}f(x, u)
      \right) 
      + 2 \epsilon_{1, \gamma}
      \\
      &\leq \lvert f^{\text{NN}}(x, u) - f^{\text{NN}}(x, \hat{u}^{\star}(x)) - \overline{\Delta}f(x, u) \rvert
      + 2 \epsilon_{1, \gamma}
      \\
      &< 2 \epsilon_{1} + \epsilon_{3} + 2 \epsilon_{4} + 2 \epsilon_{1, \gamma}
      \\
      &= 6 \epsilon_{1} + \epsilon_{3} + 2 \epsilon_{4} + 2 \gamma \text{diam}(U).
    \end{split}
  \end{equation}

  \textbf{(Step 6)}
  For all $(x, u) \in X \times U$,
  Using \eqref{eq:pcm_method}, \eqref{eq:factorization}, \eqref{eq:plse_approx}, \eqref{eq:aux_with_relu}, and \eqref{eq:step5} implies
  \begin{equation}
    \begin{split}
      \lvert \hat{f}&(x, u) - f(x, u) \rvert
      \\
      =& \vert \left(f^{\text{PCM}}(x, u) + \max(0, f^{\text{NN}}(x, u) - f^{\text{NN}}(x, \hat{u}^{\star}(x)))\right)
      \\
      &- \left(\overline{\text{pconv}f}(x, u) + \overline{\Delta} f(x, u)\right) \vert
      \\
      \leq& \lvert f^{\text{PCM}}(x, u) - \overline{\text{pconv}f}(x, u) \rvert
      \\
      &+ \lvert \max(0, f^{\text{NN}}(x, u) - f^{\text{NN}}(x, \hat{u}^{\star}(x))) - \overline{\Delta}f(x, u) \rvert
      \\
      =& \lvert f^{\text{PCM}}(x, u) - \overline{\text{pconv}f}(x, u) \rvert
      + A
      \\
      <& \epsilon_{2} + A
      \\
      <& \epsilon_{2} + \left( 6 \epsilon_{1} + \epsilon_{3} + 2 \epsilon_{4} + 2 \gamma \text{diam}(U)\right)
      \\
      =&  6 \epsilon_{1} + \epsilon_{2} + \epsilon_{3} + 2 \epsilon_{4} + 2 \gamma \text{diam}(U)
      \\
      \leq& \left(\frac{6}{12} + \frac{1}{12} + \frac{1}{12} + \frac{2}{12} + \frac{2}{12}\right) \epsilon
      = \epsilon,
    \end{split}
  \end{equation}
  by substituting $\epsilon_{1} = \epsilon_{3} = \epsilon_{4} = \epsilon / 12$ and $\gamma = \frac{\epsilon}{12 \text{diam}(U)}$,
  which concludes the proof.
\end{proof}

  \section{Proof of \autoref{thm:uat_of_plse_plus}}
\setcounter{equation}{0}  %
\renewcommand{\theequation}{\thesection.\arabic{equation}}  %
\label{sec:proof_of_uat_plseplus}
\begin{proof}
  The following proof is a modified version of the proof of \cite[Theorem 3]{kimParameterizedConvexUniversal2022b},
  and some notations are borrowed from the proof without clarification.
  By Berge's maximum theorem,
  $f^{\star}: X \to \mathbb{R}$ is continuous on $X$ where $f^{\star}(x) := \min_{u \in U} f(x, u)$~\cite[Chapter E.3]{okRealAnalysisEconomic2007}.
  A fortiori, $f^{\star}$ is uniformly continuous.
  From the uniform continuity of $f^{\star}$ and the fact that $(x, u) \mapsto f^{\star}(x)$ is an underestimator of $f$,
  it is straightforward to show that $\{f^{\text{aux}}_{i}\}_{i \in \mathbb{N}}$ is equicontinuous
  where $f^{\text{aux}}_{i}(x, u) = f^{\star}(x)$ for $i=1$
  and $f^{\text{aux}}_{i}(x, u) = f(x, u_{i}) + \langle \hat{u}^{*}_{\epsilon, i}(x), u - u_{i} \rangle$ for $i \in \mathbb{N} \setminus \{1\} $.
  A fortiori,
  $\{(x, u) \mapsto \sup_{1 \leq j \leq i}f^{\text{aux}}_{j}(x, u)\}_{i \in \mathbb{N}}$ is equicontinuous.
  Additionally,
  $(x, u) \mapsto \sup_{1 \leq i \leq I} f^{\text{aux}}_{i}(x, u)$ converges pointwise to $f$ on $X \times \tilde{U}$.
  Following the rest of the proof of \cite[Theorem 3]{kimParameterizedConvexUniversal2022b}
  concludes the proof.
\end{proof}

\end{appendices}

\bibliography{ref.bib}

\begin{thebibliography}{10}
\providecommand{\url}[1]{#1}
\csname url@samestyle\endcsname
\providecommand{\newblock}{\relax}
\providecommand{\bibinfo}[2]{#2}
\providecommand{\BIBentrySTDinterwordspacing}{\spaceskip=0pt\relax}
\providecommand{\BIBentryALTinterwordstretchfactor}{4}
\providecommand{\BIBentryALTinterwordspacing}{\spaceskip=\fontdimen2\font plus
\BIBentryALTinterwordstretchfactor\fontdimen3\font minus
  \fontdimen4\font\relax}
\providecommand{\BIBforeignlanguage}[2]{{%
\expandafter\ifx\csname l@#1\endcsname\relax
\typeout{** WARNING: IEEEtran.bst: No hyphenation pattern has been}%
\typeout{** loaded for the language `#1'. Using the pattern for}%
\typeout{** the default language instead.}%
\else
\language=\csname l@#1\endcsname
\fi
#2}}
\providecommand{\BIBdecl}{\relax}
\BIBdecl

\bibitem{amosTutorialAmortizedOptimization2023}
B.~Amos, ``\BIBforeignlanguage{en}{Tutorial on {Amortized} {Optimization}},''
  Apr. 2023, arXiv:2202.00665 [cs, math].

\bibitem{kimParameterizedConvexUniversal2022b}
J.~Kim and Y.~Kim, ``\BIBforeignlanguage{en}{Parameterized {Convex} {Universal}
  {Approximators} for {Decision}-{Making} {Problems}},''
  \emph{\BIBforeignlanguage{en}{IEEE Transactions on Neural Networks and
  Learning Systems}}, 2022.

\bibitem{boydConvexOptimization2004}
S.~P. Boyd and L.~Vandenberghe, \emph{\BIBforeignlanguage{en}{Convex
  {Optimization}}}.\hskip 1em plus 0.5em minus 0.4em\relax Cambridge, UK:
  Cambridge University Press, 2004.

\bibitem{malyutaConvexOptimizationTrajectory2022}
D.~Malyuta, T.~P. Reynolds, M.~Szmuk, T.~Lew, R.~Bonalli, M.~Pavone, and
  B.~Açıkmeşe, ``\BIBforeignlanguage{en}{Convex {Optimization} for
  {Trajectory} {Generation}: {A} {Tutorial} on {Generating} {Dynamically}
  {Feasible} {Trajectories} {Reliably} and {Efficiently}},''
  \emph{\BIBforeignlanguage{en}{IEEE Control Systems}}, vol.~42, no.~5, pp.
  40--113, 2022.

\bibitem{liuSurveyConvexOptimization2017}
X.~Liu, P.~Lu, and B.~Pan, ``\BIBforeignlanguage{en}{Survey of {Convex}
  {Optimization} for {Aerospace} {Applications}},''
  \emph{\BIBforeignlanguage{en}{Astrodynamics}}, vol.~1, no.~1, pp. 23--40,
  2017.

\bibitem{calafioreLogSumExpNeuralNetworks2020}
G.~C. Calafiore, S.~Gaubert, and C.~Possieri, ``Log-{Sum}-{Exp} {Neural}
  {Networks} and {Posynomial} {Models} for {Convex} and {Log}-{Log}-{Convex}
  {Data},'' \emph{IEEE Transactions on Neural Networks and Learning Systems},
  vol.~31, no.~3, pp. 827--838, 2020.

\bibitem{calafioreEfficientModelFreeQFactor2020}
G.~C. Calafiore and C.~Possieri, ``\BIBforeignlanguage{en}{Efficient
  {Model}-{Free} {Q}-{Factor} {Approximation} in {Value} {Space} via
  {Log}-{Sum}-{Exp} {Neural} {Networks}},'' in
  \emph{\BIBforeignlanguage{en}{2020 {European} {Control} {Conference}
  ({ECC})}}.\hskip 1em plus 0.5em minus 0.4em\relax Saint Petersburg, Russia:
  IEEE, May 2020.

\bibitem{kimVTOLAircraftOptimal2023}
J.~Kim, H.~Lee, and Y.~Kim, ``\BIBforeignlanguage{en}{{VTOL} {Aircraft}
  {Optimal} {Gain} {Prediction} via {Parameterized} {Log}-{Sum}-{Exp}
  {Networks}},'' in \emph{\BIBforeignlanguage{en}{European {Control}
  {Conference} ({ECC})}}, Bucharest, Romania, Jun. 2023.

\bibitem{calafioreUniversalApproximationResult2020}
G.~C. Calafiore, S.~Gaubert, and C.~Possieri, ``A {Universal} {Approximation}
  {Result} for {Difference} of {Log}-{Sum}-{Exp} {Neural} {Networks},''
  \emph{IEEE Transactions on Neural Networks and Learning Systems}, vol.~31,
  no.~12, pp. 5603--5612, 2020.

\bibitem{lethiDCProgrammingDCA2018}
H.~A. Le~Thi and T.~Pham Dinh, ``\BIBforeignlanguage{en}{{DC} {Programming}
  and {DCA}: {Thirty} {Years} of {Developments}},''
  \emph{\BIBforeignlanguage{en}{Mathematical Programming}}, vol. 169, no.~1,
  pp. 5--68, May 2018.

\bibitem{abramsonConvexMinorantsRandom2011}
J.~Abramson, J.~Pitman, N.~Ross, and G.~Uribe~Bravo,
  ``\BIBforeignlanguage{en}{Convex {Minorants} of {Random} {Walks} and {Lévy}
  {Processes}},'' \emph{\BIBforeignlanguage{en}{Electronic Communications in
  Probability}}, vol.~16, Jan. 2011.

\bibitem{aubinSetValuedAnalysis2009}
J.-P. Aubin and H.~Frankowska, \emph{\BIBforeignlanguage{en}{Set-{Valued}
  {Analysis}}}.\hskip 1em plus 0.5em minus 0.4em\relax Boston, MA: Birkhäuser
  Boston, 2009.

\bibitem{suttonReinforcementLearningIntroduction2018}
R.~S. Sutton and A.~G. Barto, \emph{\BIBforeignlanguage{en}{Reinforcement
  {Learning}: {An} {Introduction}}}.\hskip 1em plus 0.5em minus 0.4em\relax
  Cambridge, MA: MIT press, 2018.

\bibitem{liberzonCalculusVariationsOptimal2012}
D.~Liberzon, \emph{\BIBforeignlanguage{en}{Calculus of {Variations} and
  {Optimal} {Control} {Theory}: {A} {Concise} {Introduction}}}.\hskip 1em plus
  0.5em minus 0.4em\relax Princeton, NJ: Princeton University Press, 2012.

\bibitem{camachoModelPredictiveControl2007}
E.~F. Camacho and C.~Bordons, \emph{Model {Predictive} {Control}}, ser.
  Advanced {Textbooks} in {Control} and {Signal} {Processing}, M.~J. Grimble
  and M.~A. Johnson, Eds.\hskip 1em plus 0.5em minus 0.4em\relax London, UK:
  Springer, 2007.

\bibitem{pinkusApproximationTheoryMLP1999}
A.~Pinkus, ``\BIBforeignlanguage{en}{Approximation {Theory} of the {MLP}
  {Model} in {Neural} {Networks}},'' \emph{\BIBforeignlanguage{en}{Acta
  Numerica}}, vol.~8, pp. 143--195, 1999.

\bibitem{hornikMultilayerFeedforwardNetworks1989}
K.~Hornik, M.~Stinchcombe, and H.~White, ``\BIBforeignlanguage{en}{Multilayer
  {Feedforward} {Networks} are {Universal} {Approximators}},''
  \emph{\BIBforeignlanguage{en}{Neural Networks}}, vol.~2, no.~5, pp. 359--366,
  1989.

\bibitem{domahidiECOSSOCPSolver2013}
A.~Domahidi, E.~Chu, and S.~Boyd, ``\BIBforeignlanguage{en}{{ECOS}: {An} {SOCP}
  {Solver} for {Embedded} {Systems}},'' in
  \emph{\BIBforeignlanguage{en}{European {Control} {Conference} ({ECC})}},
  Zurich, Switzerland, Jul. 2013.

\bibitem{kingmaAdamMethodStochastic2017}
D.~P. Kingma and J.~Ba, ``Adam: {A} {Method} for {Stochastic} {Optimization},''
  Jan. 2017, arXiv: 1412.6980 [cs].

\bibitem{agrawalDifferentiableConvexOptimization2019}
A.~Agrawal, B.~Amos, S.~Barratt, and S.~Boyd,
  ``\BIBforeignlanguage{en}{Differentiable {Convex} {Optimization} {Layers}},''
  in \emph{\BIBforeignlanguage{en}{33rd {Conference} on {Neural} {Information}
  {Processing} {Systems} ({NeurIPS} 2019)}}, Vancouver, Canada, Dec. 2019.

\bibitem{blondelEfficientModularImplicit2022}
M.~Blondel, Q.~Berthet, M.~Cuturi, R.~Frostig, S.~Hoyer, F.~Llinares-López,
  F.~Pedregosa, and J.-P. Vert, ``\BIBforeignlanguage{en}{Efficient and
  {Modular} {Implicit} {Differentiation}},'' Oct. 2022, arXiv:2105.15183 [cs,
  math, stat].

\bibitem{nielsenMonteCarloInformation2018}
F.~Nielsen and G.~Hadjeres, ``\BIBforeignlanguage{en}{Monte {Carlo}
  {Information} {Geometry}: {The} {Dually} {Flat} {Case}},''
  \emph{\BIBforeignlanguage{en}{arXiv:1803.07225 [cs, stat]}}, Mar. 2018.

\bibitem{kimOfflineDifferentiableQlearning2022}
J.~Kim, H.~Lee, Y.~Lee, and Y.~Kim, ``Offline {Differentiable} {Q}-learning for
  {Aircraft} {Control} {Design},'' in \emph{2022 {Asia}-{Pacific}
  {International} {Symposium} on {Aerospace} {Technology}}, Niigata, Japan,
  Oct. 2022.

\bibitem{kmogensenOptimMathematicalOptimization2018}
P.~K~Mogensen and A.~N~Riseth, ``\BIBforeignlanguage{en}{Optim: {A}
  {Mathematical} {Optimization} {Package} for {Julia}},''
  \emph{\BIBforeignlanguage{en}{Journal of Open Source Software}}, vol.~3,
  no.~24, p. 615, 2018.

\bibitem{tarnFuzzyControlWing1993}
J.~H. Tarn and F.~Y. Hsu, ``\BIBforeignlanguage{en}{Fuzzy {Control} of {Wing}
  {Rock} for {Slender} {Delta} wings},'' in \emph{\BIBforeignlanguage{en}{1993
  {American} {Control} {Conference}}}, San Francisco, CA, Jun. 1993.

\bibitem{rockafellarConvexAnalysis1970}
R.~T. Rockafellar, \emph{\BIBforeignlanguage{en}{Convex {Analysis}}},
  2nd~ed.\hskip 1em plus 0.5em minus 0.4em\relax Princeton, NJ: Princeton
  University Press, 1970.

\bibitem{rudinPrinciplesMathematicalAnalysis1976}
W.~Rudin, \emph{\BIBforeignlanguage{en}{Principles of {Mathematical}
  {Analysis}}}, 3rd~ed., ser. International {Series} in {Pure} and {Applied}
  {Mathematics}.\hskip 1em plus 0.5em minus 0.4em\relax New York, NY:
  McGraw-Hill, 1976.

\bibitem{okRealAnalysisEconomic2007}
E.~A. Ok, \emph{\BIBforeignlanguage{en}{Real {Analysis} with {Economic}
  {Applications}}}.\hskip 1em plus 0.5em minus 0.4em\relax Princeton, NJ:
  Princeton University Press, 2007, vol.~10.

\bibitem{munkresTopology2014}
J.~R. Munkres, \emph{\BIBforeignlanguage{en}{Topology}}, 2nd~ed.\hskip 1em plus
  0.5em minus 0.4em\relax Harlow, Essex, UK: Pearson, 2014.

\end{thebibliography}
\bibliographystyle{IEEEtran}

\begin{IEEEbiographynophoto}{Jinrae Kim}
is a Ph.D. candidate in the Department of Aerospace Engineering at Seoul National University. He received the B.S. degree in mechanical and aerospace engineering from Seoul National University, Republic of Korea, in 2017. His current research interests include machine learning, optimization, and control for robotics and aerospace engineering applications.
\end{IEEEbiographynophoto}

\begin{IEEEbiographynophoto}{Youdan Kim}
received B.S. and M.S. degrees in aeronautical engineering from Seoul National University, Republic of Korea, in 1983 and 1985, respectively, and the Ph.D. degree in aerospace engineering from Texas A\&M University in 1990. He joined the faculty of Seoul National University in 1992, where he is currently a Professor with the Department of Aerospace Engineering. His current research interests include aircraft control system design, reconfigurable control system design, path planning, and guidance techniques for aerospace systems.
\end{IEEEbiographynophoto}

\end{document}